\documentclass{article}

\PassOptionsToPackage{numbers, sort&compress}{natbib}

    \usepackage[preprint]{neurips_2025}

\usepackage[utf8]{inputenc} 
\usepackage[T1]{fontenc}    
\usepackage{hyperref}       
\usepackage{url}            
\usepackage{booktabs}       
\usepackage{amsfonts}       
\usepackage{nicefrac}       
\usepackage{microtype}      
\usepackage{xcolor}         
\usepackage{amsmath}
\usepackage{amsthm}
\usepackage{algorithm}
\usepackage[noend]{algpseudocode}
\usepackage{adjustbox}
\usepackage{cleveref}
\usepackage{subcaption}
\usepackage{multirow}
\usepackage{booktabs}
\crefname{theorem}{Theorem}{Theorems}
\Crefname{theorem}{Theorem}{Theorems}

\newtheorem{theorem}{Theorem}[section]
\crefname{definition}{Definition}{Definitions}
\Crefname{definition}{Definition}{Definitions}
\newtheorem{definition}{Definition}[section]

\title{Formal Synthesis of Certifiably Robust Neural Lyapunov-Barrier Certificates}

\author{%
  Chengxiao Wang \\
  University of Illinois, \\ Urbana-Champaign \\
  \texttt{cw124@illinois.edu} \\
  \And
  Haoze Wu \\
  Amherst College \\
  \texttt{hwu@amherst.edu}
  \And
  Gagandeep Singh \\
  University of Illinois, \\ Urbana-Champaign \\
  \texttt{ggnds@illinois.edu} \\
}

\begin{document}

\maketitle

\begin{abstract}
Neural Lyapunov and barrier certificates have recently been used as powerful tools for verifying the safety and stability properties of deep reinforcement learning (RL) controllers. However, existing methods offer guarantees only under fixed ideal unperturbed dynamics, limiting their reliability in real-world applications where dynamics may deviate due to uncertainties. In this work, we study the problem of synthesizing \emph{robust neural Lyapunov barrier certificates} that maintain their guarantees under perturbations in system dynamics. We formally define a robust Lyapunov barrier function and specify sufficient conditions based on Lipschitz continuity that ensure robustness against bounded perturbations. We propose practical training objectives that enforce these conditions via adversarial training, Lipschitz neighborhood bound, and global Lipschitz regularization. We validate our approach in two practically relevant environments, Inverted Pendulum and 2D Docking. The former is a widely studied benchmark, while the latter is a safety-critical task in autonomous systems. We show that our methods significantly improve both certified robustness bounds (up to $4.6$ times) and empirical success rates under strong perturbations (up to $2.4$ times) compared to the baseline. Our results demonstrate effectiveness of training robust neural certificates for safe RL under perturbations in dynamics.
\end{abstract}

\section{Introduction}
Deep Reinforcement learning (RL) has demonstrated remarkable performance in a wide range of sequential decision-making tasks, such as robotic control ~\cite{kalashnikov2018scalable}, autonomous driving ~\cite{sallab2017deep}, game playing ~\cite{mnih2013playing}. However, using RL controllers in safety-critical domains such as autonomous driving remains challenging due to the lack of formal guarantees on safety and stability. Classical control theory provides formal guarantees through analytic Lyapunov and barrier functions, but deriving such certificates for 
the complex dynamics of modern DRL tasks is notoriously difficult. To sidestep this bottleneck, recent work has explored parameterizing certificates with neural networks. These methods learn a neural certificate function jointly with the policy, and prove the validity of the learned certificate with formal verifiers~\cite{chang2019neural,abate2021fossil,abate2020formal,mandal2024formally}. 

By and large, the validity of the neural certificate is only established for a fixed ideal dynamics in existing work. However, in practice, the actual observed next state by the controller might deviate from the ideal dynamics, due to modeling errors ~\cite{pinto2017robust}, actuation noise ~\cite{pan2019risk}, or environmental perturbations ~\cite{mandlekar2017adversarially,pinto2017robust}. The existence of these unavoidable uncertainty means that the controller may violate the safety or stability properties even if it has been fully verified against the ideal dynamics. Indeed, such sim-to-real-error has long been recognized and methods for guaranteeing robustness against bounded disturbances have been studied for analytic Lyapunov certificates~\cite{Cardona2025,freeman1994robust,garg2021robust}. However, analogous methods that guarantee such robustness property for \emph{neural} certificates are still largely unexplored. 

In this work, we bridge this gap and study the problem of synthesizing neural certificates robust against norm-bounded time-varying uncertainty. Unlike existing approach that obtains probabilistic guarantees for stochastic systems~\cite{lechner2022stability,vzikelic2023learning}, we aim to give deterministic guarantee against bounded perturbations in the system dynamics. We formally define a robust neural Lyapunov-barrier certificate and develop a training framework to obtain such certificates. The key idea is to enforce a stronger decrease condition on the Lyapunov function: instead of requiring decrease only at the next state, we ensure that the Lyapunov value decreases across a neighborhood of perturbed next states, so that it keeps the safety and liveness property even under perturbation. We formally establish the relationship between the Lipchitz constant of the certificate and the strengthened decrease condition, and, based on this theoretical result, investigate several candidate loss functions for training the robust certificate. We evaluated our approach in two case studies, and demonstrated that our robust training approaches, including adversarial training, and Lipschitz regularization, improve both certified robustness bound and empirical success rate under perturbations, by up to 4.6 times and 2.4 times over the baseline method respectively. 

To summarize, our contributions include:
\begin{itemize}
    \item We study, for the first time, the problem of synthesizing neural Lyapunov/barrier certificates that provably satisfy the \textbf{robust Lyapunov condition}, which guarantees the Lyapunov decrease condition under bounded perturbations in the system dynamic.
    \item We establish sufficient theoretical conditions for robustness that leverage Lipschitz continuity, and develop multiple practical training methods to enforce robustness, including global Lipschitz regularization, local neighborhood bounds, and adversarial training.
    \item We demonstrate the practical benefits of our approach in two case studies. Our approach improves the certified perturbation bounds by up to $4.6$ times, and for empirical robustness, we achieve up to $2.4$ times success rate compared to baselines under strong perturbations.
    
\end{itemize}

\section{Related Work}
In this section, we review related work on control certificates. A detailed discussion of other related work can be found in Appendix~\ref{appendix:related-work}. There are extensive works on safe control, most of which using Lyapunov certificate to guarantee the liveness and use barrier function to guarantee the safety, with traditional approaches including numerical methods, polynomial optimization, and simulation-guided synthesis ~\cite{giesl2015review}. 
Recent work has explored jointly training neural controllers and certificates using sampled data~\cite{dawson2022safe,liu2022safe}, and verifying trained certificates with formal tools~\cite{chang2019neural,abate2021fossil,abate2020formal,mandal2024formally,mathiesen2022safety}.
However, they only consider the dynamics that are deterministic without any perturbations. There has also been work on generating neural certificate for stochastic control system~\cite{lechner2022stability,vzikelic2023learning}, which provides probabilistic reach-avoid guarantees. In contrast, our work provides deterministic reach-avoid guarantee under norm-bounded time-varying perturbations. Previous work in neural certificate training has also considered enforcing Lipschitz continuity as a regularization term~\cite{lechner2022stability,vzikelic2023learning,dawson2022safe} in order to improving the efficiecny and the training and verification. In contrast, we leverage Lipchitz continuity to improve the robustness of the certificate against perturbations in the system dynamics.

\section{Background}

\subsection{Notation}
We denote the \(l_p\) norm of a vector \( x \) as \( \| x\|_p\), the spectral norm of matrix \(A\) as \( \| A \|_2 \). The \(l_p\) ball of radius \(\delta\) around \(x\) is \( \mathcal{B}_{\delta,p}(x) = \{x' : \|x' - x\|_p \leq \delta\} \). \(\mathcal{X}\) is the set of possible states and \(\mathcal{U}\) is the set of possible control inputs. The function \( f:  \mathcal{X} \times \mathcal{U} \rightarrow \mathcal{X} \) is a transition function that defines the system's transition dynamics. Given a current state \(x\) and a control input \(u\), the transition function outputs the next state. A policy \(\pi: \mathcal{X} \rightarrow \mathcal{U}\) maps each state to a control action and \( u = \pi(x) \) is the control action taken by policy \( \pi \) at state \(x\).

\subsection{Reach-While-Avoid task}

We consider a discrete-time dynamical system, which can be either linear or non-linear, defined as:
\begin{equation}
\label{eq:dynamic-system}
    x_{t+1} = f(x_t, u_t), \quad x_t \in \mathcal{X} \subset \mathbb{R}^n, \ u_t \in \mathcal{U} \subset \mathbb{R}^m,
\end{equation}
where \(x_t\) denotes the system state at time \(t\), \(u_t\) is the control input. 
We define three important subsets of the state space: the \textbf{unsafe set} \( \mathcal{X}_U \subset \mathcal{X} \), which must be avoided; the \textbf{goal set} \( \mathcal{X}_G \subset \mathcal{X} \setminus \mathcal{X}_U \), which the system must eventually reach; and the \textbf{initial set} \( \mathcal{X}_I \subset \mathcal{X} \setminus \mathcal{X}_U \), where the system starts.

The objective of \emph{Reach-While-Avoid} (RWA) task is to synthesize a policy \( \pi \) such that, for every \( x_0 \in \mathcal{X}_I \), the induced trajectory \( \{x_t\}_{t \geq 0} \) satisfies:
\[
\exists \tau \geq 0,\  x_\tau \in \mathcal{X}_G, \quad \land \quad \forall t \in [0, \tau],\ x_t \notin \mathcal{X}_U.
\]

\subsection{Control Lyapunov Barrier Function}
\label{definition:clbf}
A \emph{Control Lyapunov function} (CLF) represents the energy level of a state. As time passes, energy decreases until the system reaches the zero-energy equilibrium point ~\cite{haddad2008nonlinear}. A \emph{Control Barrier function} (CBF) ~\cite{ames2019control}, also an energy-based function, is used to certify that the system will never enter an unsafe state. This is done by assigning unsafe states' function values above a threshold and proving that the system's function value will never exceed the threshold.

A function \( V : \mathcal{X} \rightarrow \mathbb{R} \) is a \emph{Control Lyapunov Barrier Function }(CLBF) if it combines the properties of CLF and CBF. More formally, it can be defined as follows:
\begin{definition}[Control Lyapunov Barrier Function]
\label{def:lyapunov-barrier-certificate}

A function \( V : \mathcal{X} \rightarrow \mathbb{R}_{\geq c} \) is a Control Lyapunov Barrier Function with parameters \((\alpha, \beta, \epsilon)\) w.r.t. sets \( \mathcal{X}_I \) (initial), \( \mathcal{X}_G \) (goal), and \( \mathcal{X}_U \) (unsafe), if \(\alpha > \beta > c, ~ \epsilon > 0 \) and the following conditions hold:
\begin{align}
V(x) &\leq \beta, \quad \forall x \in \mathcal{X}_I, \label{eq:init-cond} \\
V(x) - V(f(x, \pi(x))) & \geq \epsilon, \quad \forall x \in \left\{ x \in \mathcal{X} \setminus \mathcal{X}_G \mid V(x) \leq \beta \right\}, \label{eq:decrease-cond} \\
V(x) &\geq \alpha, \quad \forall x \in \mathcal{X}_U. \label{eq:safety-cond}
\end{align}
\end{definition}
Intuitively, this definition encodes both safety and liveness properties. For safety property, since Eq.~\eqref{eq:init-cond} ensures that trajectories starting within a safe region and Eq.~\eqref{eq:decrease-cond} requires the certificate value to decrease at each step, the trajectory never enters the unsafe region defined by Eq.~\eqref{eq:safety-cond}. For liveness property(i.e., the reachability of the goal state), although no explicit constraint is added in \( V(x) \) for states in \( \mathcal{X}_G \), it is still guaranteed because without entering \( \mathcal{X}_G \), the certificate value would decrease infinitely by Eq.~\eqref{eq:decrease-cond}, contradicting the fact that \( V \) is lower-bounded. 

Naturally, this concept can be applied to the RWA task, where the sets \( \mathcal{X}_I \), \( \mathcal{X}_G \), and \( \mathcal{X}_U \) represent the initial, goal, and unsafe regions of the task, respectively. In this context, the CLBF for the RWA task is referred to as the \emph{Lyapunov Barrier Certificate} for the specific RWA task. These constraints ensure that any trajectory \( \{x_t\}_{t \geq 0} \) starting from \( x_0 \in \mathcal{X}_I \) given by Eq.~\eqref{eq:dynamic-system} stay away from \( \mathcal{X}_U \), and target toward \( \mathcal{X}_G \), while monotonically decreasing the certificate function \( V \).

\subsection{Global Lipschitz Bound}
Consider a feedforward neural network with \( \ell \) layers, mapping input \(x\) to output \( x_{\ell+1}\), defined by:
\[
x_1 = x, \quad x_{k+1} = \phi(W_k x_k + b_k) \text{ for } k = 1, \ldots, \ell-1, \quad x_{\ell+1} = W_{\ell} x_{\ell} + b_{\ell},
\]
where \( W_k \in \mathbb{R}^{n_{k+1} \times n_k} \), \( b_k \in \mathbb{R}^{n_{k+1}} \) are learned parameters, and \( \phi \) is the element-wise activation function.
A global Lipschitz constant \( L_p \) for a function \( f : \mathbb{R}^{n_1} \rightarrow \mathbb{R}^{n_{\ell+1}} \) with respect to a norm \( \| \cdot \|_p \) is defined as:
\[
L_p = \sup_{x, y \in \mathbb{R}^{n_1}, ~x \not= y}\frac{\|f(x) - f(y)\|_p}{\|x - y\|_p}
\]
The function is said to be Lipschitz continuous if \( L_p\) exists and is finite.
For activation functions with Lipschitz constant equal to 1, the global Lipschitz bound of the network with respect to the \( l_2 \)-norm can be upper-bounded by the product of the spectral norms of the weight matrix of each layer \cite{szegedy2013intriguing}:
\begin{equation}
\label{eq:lip-matrix-norm}
    L_2 \leq \prod_{k=1}^{\ell} \|W_k\|_2
\end{equation}
While this work focuses on the most common ReLU activations, our approach can extend to other popular activation functions that are 1-Lipschitz (e.g., Leaky ReLU, SoftPlus, Tanh, Sigmoid, ArcTan, or Softsign), as the same spectral norm-based Lipschitz bound applies.

\noindent\textbf{Extension to General Norms:}
Following the work in \cite{relations} we can upper bound the Lipschitz constant \( L_p\) with respect to \( l_p \) norm, by a scaled version of the \( l_2 \)-Lipschitz constant: 
\begin{equation}
\label{eq:lip-diff-norm}
    L_p \leq K_{p,2} \cdot L_2 ,
\end{equation}
Specifically, \( K_{p,2} = n^{\left(\frac{1}{2} - \frac{1}{p}\right)} m^{\left(\frac{1}{p} - \frac{1}{2}\right)} \) where \(n\) and \(m\) are the input and output dimensions of the neural network.
The detailed derivation is provided in Appendix~\ref{appendix:norm-inequality}.

\section{Methodology}
In Section~\ref{sec:robust-lyapunov-cond}, we introduce a robust extension of the standard CLBF condition, requiring the certificate to decrease not only at the next state but over its neighborhood, ensuring robustness to perturbations. In practice, this accounts for errors in the environment’s computed state. As reachable sets may grow exponentially, we require all to remain outside the unsafe region. Section~\ref{sec:training-loss-design} presents three loss function designs supporting the robust CLBF condition: an adversarial loss for empirical robustness, a local robustness loss using global Lipschitz bounds to conservatively upper-bound certificate values within a perturbed neighborhood, and a global Lipschitz regularization for certifiable robustness over the full state space. The overall training algorithm follows a counterexample-guided inductive synthesis (CEGIS) loop~\cite{solar2008program}. A full description of the algorithm is provided in Appendix~\ref{appendix:training-alg}.

\subsection{Robust Lyapunov Barrier Certificate}
\label{sec:robust-lyapunov-cond}
We first define what it means for a controller of an RWA task to be robust under state perturbations. Figure~\ref{fig:robust-rwa} provides an illustrative example. In an RWA task, the robust controller is required to reach the goal region \( \mathcal{X}_G \) from an initial state in \( \mathcal{X}_I \), while avoiding the unsafe region \( \mathcal{X}_U \), even when states computed by the environment at each time step are perturbed by a bounded amount. The definition below formalizes this requirement.

\begin{definition}[Robust Controller for Reach-While-Avoid (RWA) Task]

A controller for Reach-While-Avoid task is said to be \emph{robust under state perturbations} \( \delta \in \mathbb{R} \) in the \( l_p \) norm if, for any initial state \( x_0 \in \mathcal{X}_I \), any induced trajectory
\( \{x_t\}_{t=0}^{\tau} \) given by \(x_{t+1}=f(x_t,u_t)+\delta_t\) where \(f\) is the transition function, \(u_t\) is the control input, \(\|\delta_t\|_p \leq \delta\), satisfies:
\[
 x_0 \in \mathcal{X}_I, \quad \land \quad \exists \tau \geq 0,\  x_\tau \in \mathcal{X}_G, \quad \land \quad \forall t \in \left[0, \tau-2\right],\ \mathcal{B}_{\delta,p}(f(x_t,u_t)) \cap \mathcal{X}_U = \emptyset.
\]
\end{definition}

\begin{figure}[t]

    \centering
    \begin{subfigure}[t]{0.48\textwidth}
        \centering
        \includegraphics[width=0.7\textwidth]{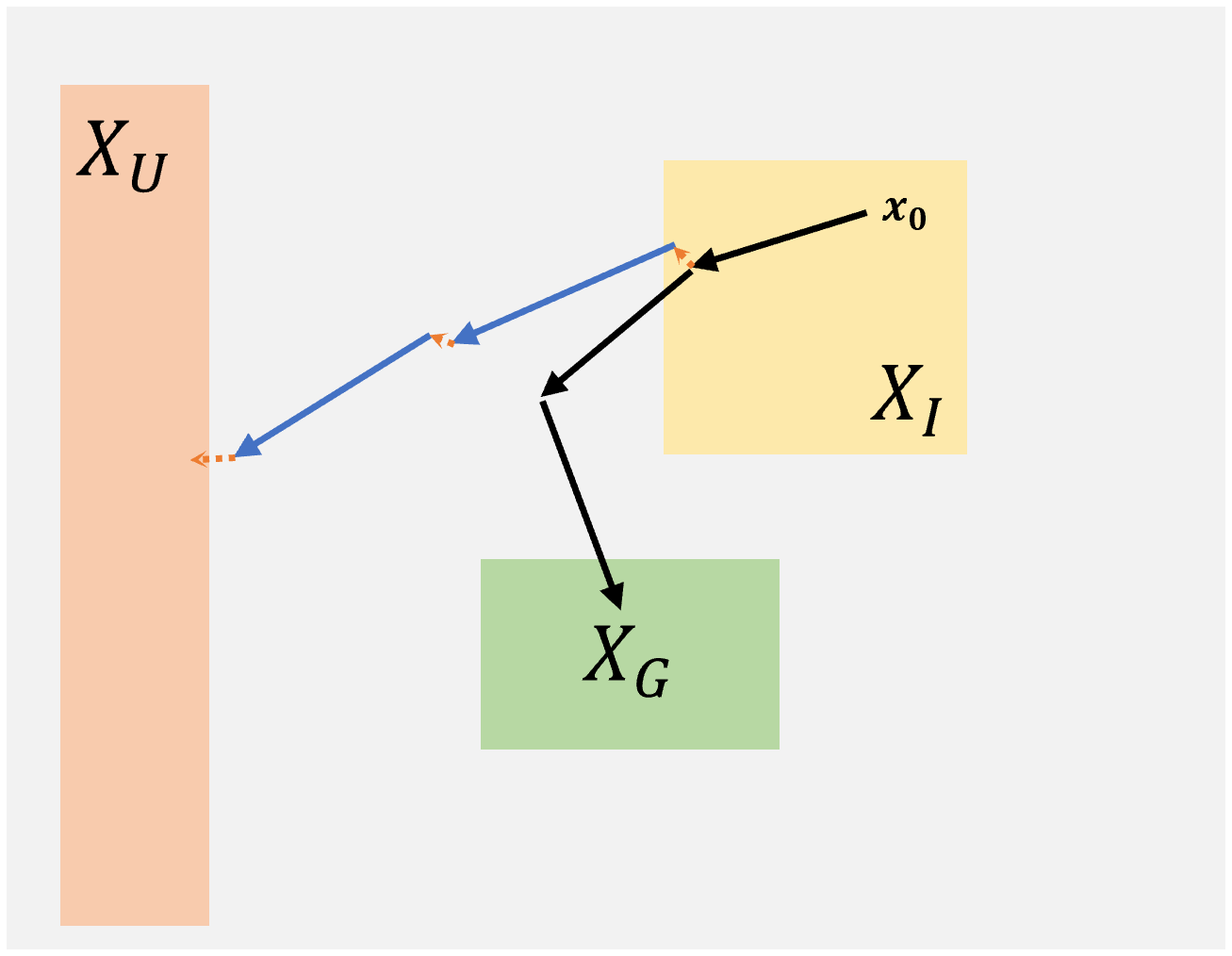}
        \caption{Standard CLBF-certified controller. The nominal trajectory reaches the goal \(X_G\), but after perturbations are added to each state, it may violate safety and enter the unsafe region \(X_U\).}
        \label{fig:nominal}
    \end{subfigure}
    \hfill
    \begin{subfigure}[t]{0.48\textwidth}
        \centering
        \includegraphics[width=0.7\textwidth]{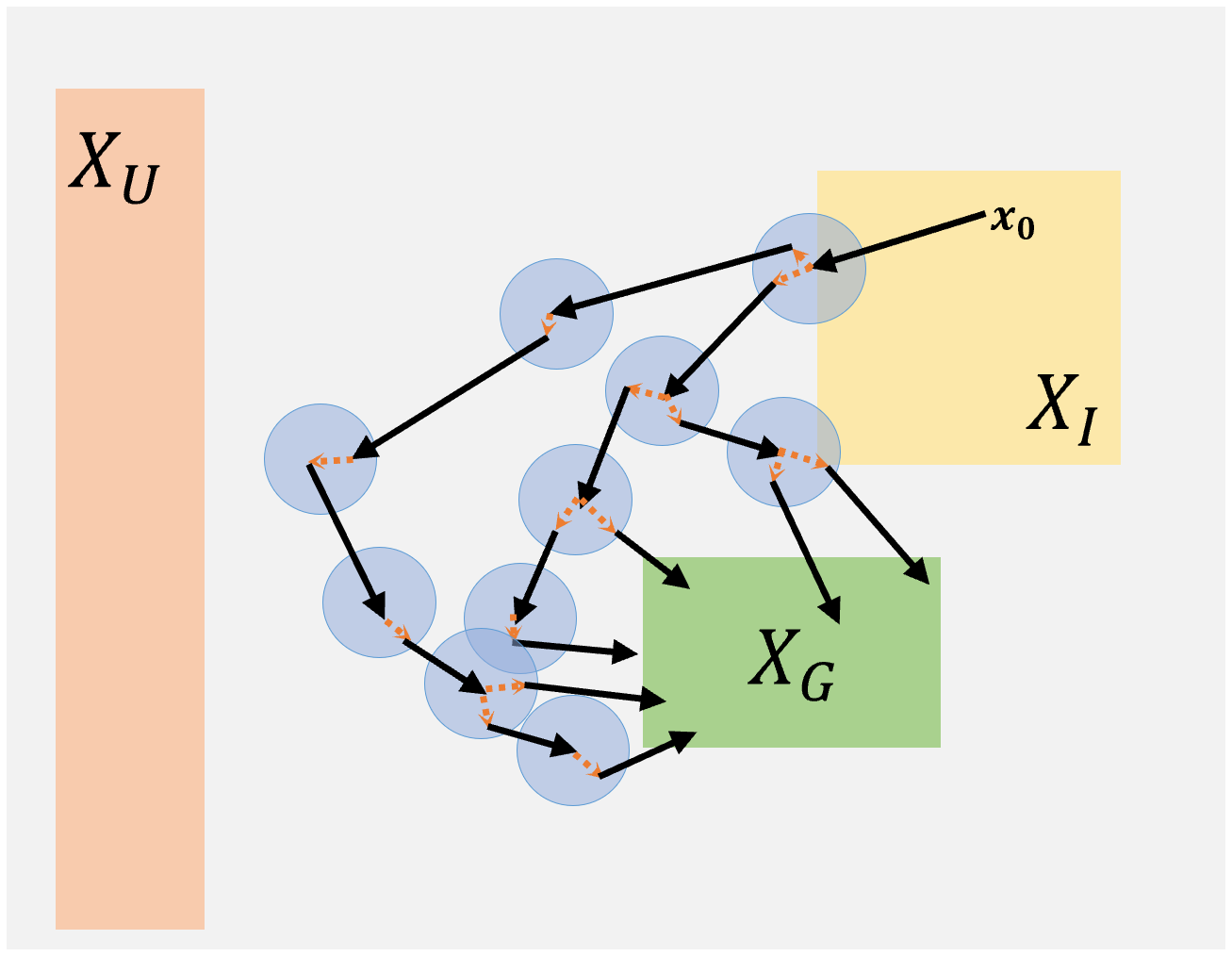}
        \caption{Robust CLBF-certified controller. At each step, the controller considers a full set of reachable next states within a perturbation neighborhood. All perturbed trajectories reach the goal region \(X_G\).} 
        \label{fig:robust}
    \end{subfigure}
    \caption{Comparison of standard vs. robust certified controllers under bounded state perturbations.}
    \label{fig:robust-rwa}
    \vspace{-0.3cm}
\end{figure}

Unlike traditional controllers for RWA tasks, our robust controller definition requires that, starting from any state reachable from the initial state (accounting for perturbations), the set of all possible next states must remain within the safe region. Although the controller output at each step determines a specific next state, we consider the entire neighborhood around it that state perturbations might induce. At each subsequent step, any state within the current reachable set may transition into another perturbed region, leading to a state expansion. 

To certify the controller satisfies this definition, we extend the classical Lyapunov Barrier Certificate to account for perturbations in states. Specifically, we modify the decrease condition Eq.~\eqref{eq:decrease-cond} to ensure that the certificate value decreases from \(x_t\) to \(x_{t+1}\) even under perturbation in \( x_{t+1}\). More formally, we define the \emph{robust Lyapunov Barrier Certificate} as follows:

\begin{definition}[Robust Lyapunov Barrier Certificate for RWA tasks]
\label{def:robust-lyapunov-barrier-certificate}
A function \( V : \mathcal{X} \rightarrow \mathbb{R}_{\geq c} \) is a \((\delta,p)\)-Robust Lyapunov Barrier Certificate with parameters \((\alpha, \beta, \epsilon)\) for a RWA task w.r.t. sets \( \mathcal{X}_I \) (initial), \( \mathcal{X}_G \) (goal), and \( \mathcal{X}_U \) (unsafe), if \(\alpha > \beta > c, ~ \epsilon > 0 \) and the following conditions hold:
\begin{align}
V(x) &\leq \beta, \quad \forall x \in \mathcal{X}_I, \label{eq:robust-init-cond} \\
V(x) - V(y) & \geq \epsilon, \quad \forall x \in \left\{ x \in \mathcal{X} \setminus \mathcal{X}_G \mid V(x) \leq \beta \right\}, \forall y \in \mathcal{B}_{\delta,p}(f(x,\pi(x)))
\label{eq:robust-decrease-cond} \\
V(x) &\geq \alpha, \quad \forall x \in \mathcal{X}_U. \label{eq:robust-safety-cond}
\end{align}
\end{definition}

This robust version strengthens the classical certificate by requiring the decrease condition to hold for all next states within a perturbation ball. Intuitively, the controller must still drive the system toward the goal and avoid unsafe regions despite small perturbations.
The value function \(V\) is therefore robustly decreasing until the goal is reached and remains outside unsafe states.
With this strengthened certificate condition, we can formally guarantee robust success of the trajectory:

\begin{theorem}[Robust RWA Guarantee]
\label{thm:robust-rwa-simple}
Let \( \{x_t\}_{t=0}^\infty \) be a trajectory generated by policy \( \pi \) and transition function \( f \), and the perturbation at each state is less than \(\delta\) under \(l_p\) norm , i.e., \( x_{t+1} \in \mathcal{B}_{\delta, p}(f(x_t,\pi(x_t))) \). If there exists a \( (\delta, p) \)-Robust Lyapunov Barrier Certificate \( V : \mathcal{X}_s \rightarrow \mathbb{R}_{\geq c}\) where \(\mathcal{X}_s = \left\{ x_\tau \mid \forall t \in [0, \tau-1]. x_t \not\in \mathcal{X}_G \right\} \subset \mathcal{X}\) with parameters \( (\alpha, \beta, \epsilon) \), satisfies condition Eq.~\eqref{eq:robust-init-cond}--\eqref{eq:robust-safety-cond}, then the trajectory:
    (i) will reach the goal region \( \mathcal{X}_G \) in finite time and
    (ii) will never enter the unsafe region \( \mathcal{X}_U \),
thereby completing the robust RWA task under state-wise perturbations.
\end{theorem}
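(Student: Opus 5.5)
The argument is an induction along the perturbed trajectory. It uses the robust decrease condition Eq.~\eqref{eq:robust-decrease-cond} in place of the nominal one, and then repeats the two-part reasoning given after Definition~\ref{def:lyapunov-barrier-certificate}: a lower bound gives liveness, and a sublevel set gives safety.

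\textbf{The invariant.} For every $t$ such that $x_0,\dots,x_{t-1}\notin\mathcal{X}_G$, we have
\[
x_t\in\mathcal{X}_s \quad\text{and}\quad V(x_t)\le \beta-t\epsilon .
\]
The domain $\mathcal{X}_s$ is chosen exactly so that $V$ is defined along the trajectory until the first goal visit.

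The base case $t=0$ follows from $x_0\in\mathcal{X}_I$ and Eq.~\eqref{eq:robust-init-cond}.

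For the inductive step, suppose $x_t\notin\mathcal{X}_G$ and $V(x_t)\le\beta-t\epsilon\le\beta$. Then $x_t$ lies in the set $\{x\in\mathcal{X}\setminus\mathcal{X}_G \mid V(x)\le\beta\}$. The hypothesis $x_{t+1}\in\mathcal{B}_{\delta,p}(f(x_t,\pi(x_t)))$ lets us take $y=x_{t+1}$ in Eq.~\eqref{eq:robust-decrease-cond}. This gives
\[
V(x_{t+1})\le V(x_t)-\epsilon\le\beta-(t+1)\epsilon .
\]
Since no earlier state lies in $\mathcal{X}_G$, $x_{t+1}\in\mathcal{X}_s$.

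\textbf{Liveness, part (i).} Suppose the trajectory never enters $\mathcal{X}_G$. Then the invariant holds for all $t$, so $c\le V(x_t)\le\beta-t\epsilon$ for every $t$. This is impossible once $t>(\beta-c)/\epsilon$. Hence there is a first time $\tau\le\lceil(\beta-c)/\epsilon\rceil$ with $x_\tau\in\mathcal{X}_G$.

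\textbf{Safety, part (ii).} For every $t\le\tau$ the invariant gives $V(x_t)\le\beta<\alpha$. By Eq.~\eqref{eq:robust-safety-cond}, any $x\in\mathcal{X}_U$ has $V(x)\ge\alpha$, so $x_t\notin\mathcal{X}_U$.

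The same argument gives the stronger ball condition in the definition of a robust RWA controller. Fix $t\le\tau-1$. Every $y\in\mathcal{B}_{\delta,p}(f(x_t,\pi(x_t)))$ satisfies $V(y)\le V(x_t)-\epsilon<\beta<\alpha$, so $y\notin\mathcal{X}_U$. Thus the whole perturbation ball avoids $\mathcal{X}_U$. This applies to every $t\le\tau-1$, which covers the required range $[0,\tau-2]$.

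\textbf{Main obstacle.} The delicate point is not the arithmetic but making sure every evaluation of $V$ is legitimate. $V$ is only defined on $\mathcal{X}_s$, so Eq.~\eqref{eq:robust-decrease-cond} must be read with $y$ ranging over the part of the ball lying in $\mathcal{X}_s$. I would first check that each point where $V$ is evaluated lies in $\mathcal{X}_s$, and argue this from the definition of $\mathcal{X}_s$ as the states reachable before the goal is hit.

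The other point to state explicitly is that the decrease condition is applied only while $x_t\notin\mathcal{X}_G$. After time $\tau$ no claim is made: the theorem asserts avoidance only up to reaching the goal.
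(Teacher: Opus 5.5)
Your proposal is correct and follows the same route as the paper's proof. Liveness comes from the $\epsilon$-decrease against the lower bound $c$, and safety from the sublevel bound $V(x_t)\le\beta<\alpha$ together with the safety condition. Your explicit invariant $V(x_t)\le\beta-t\epsilon$ spells out the induction the paper leaves implicit when it applies the decrease condition, which requires $V(x_t)\le\beta$. Your extra ball-avoidance claim goes beyond what the paper proves; as you note yourself, it needs $V$ to be defined on the whole ball, i.e.\ $\mathcal{X}_s$ read as all states reachable under admissible perturbations rather than the points of one trajectory.
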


The detailed proof is provided in Appendix~\ref{appendix:robust-rwa-proof}. Here \(\mathcal{X}_s\) denotes the set of all states in the trajectory before reaching the goal. At a high level, the guarantee follows from the fact that the robust Lyapunov condition ensures a decrease of \( V \) under bounded perturbations at each step. If any perturbed next state enters the unsafe set \( \mathcal{X}_U \), it would violate the safety condition enforced by the certificate Eq.~\eqref{eq:robust-safety-cond}. Similarly, if the trajectory failed to reach the target, \( V(x_t) \) would decrease indefinitely, contradicting the lower bound \(c\) of the certificate. Hence, satisfying the robust certificate conditions along the trajectory implies that the RWA task is completed under perturbations on states.

\subsection{Training Objective}
\label{sec:training-loss-design}
To synthesize a controller and certificate that satisfy the robust CLBF conditions under norm-bounded perturbations, we design a training objective with multiple loss terms. In this section, we describe how these loss components are formulated and how they relate to the robust CLBF condition. The total loss combines these terms and is later used in our training algorithm described in Appendix~\ref{appendix:training-alg}.

\paragraph{Initialization Loss:}
This loss term proposed in \cite{mandal2024formally} enforces \( V(x) \leq \beta \) for initial states \( x \in \mathcal{X}_I \) as required by condition Eq.~\eqref{eq:robust-init-cond}:
\begin{equation}
\label{eq:loss-init}
    \mathcal{L}_{\text{init}} = \sum_{x \in \mathcal{X}_I} \max(0, V(x) - \beta)
\end{equation}
When \( \mathcal{L}_{\text{init}} \) becomes \(0\), it means that \( V(x) -\beta \leq 0 \) for all data we sample from \( \mathcal{X}_I \), then Eq.~\eqref{eq:robust-init-cond} is satisfied in the training data.

\paragraph{Descent Loss:}
We first introduce the descent loss without any perturbation proposed in \cite{mandal2024formally}, which enforces \( V\) decrease at least \( \epsilon \) at each step before reaching the goal state as required by Eq.~\eqref{eq:decrease-cond}.
\begin{equation}
\label{eq:loss-dec}
    \mathcal{L}_{\text{dec}} =\sum_{x \in \mathcal{X} \setminus \mathcal{X}_G ~\land~ V(x) \leq \beta}  \max\left(0, \epsilon - (V(x) - V(f(x, \pi(x))))\right)
\end{equation}
When \( \mathcal{L}_{\text{dec}} \) becomes \(0\), \(\epsilon - (V(x) - V(f(x, \pi(x)))) \leq 0 \) for all data we sampled outside \( \mathcal{X}_G\) and filtered by the precondition \( V(x) \leq \beta \). Then, Eq.~\eqref{eq:decrease-cond} is satisfied in the training data.

\subsubsection{Enforcing Robust Lyapunov Condition}
The descent condition Eq.~\eqref{eq:loss-dec} ensures that the certificate value \( V(x) \) decreases at least \(\epsilon\) from the current state \( x \) to its next state \( f(x, \pi(x)) \). However, this ideal condition is not sufficient in practice, where modeling errors, sensor noise, or environmental disturbances can perturb the next state.
To address this, we introduce loss terms designed to enforce either empirical robustness or certified robustness. The key idea is to ensure that \( V(x) \) remains \(\epsilon\) larger than its value not only in the ideal next state but throughout a neighborhood around it, i.e. to satisfy Eq.~\eqref{eq:robust-decrease-cond} instead of Eq.~\eqref{eq:decrease-cond}. 
The loss terms described in the following each contribute to enforcing this property through different methods.
Let \( y_{\text{adv}} \) be the point in the \( l_p \)-ball with radius \(\delta\) around \( f(x, \pi(x)) \) that maximizes $V$:
\[
y_{\text{adv}} = \arg\max_{y \in \mathcal{B}_{\delta,p}(f(x, \pi(x)))} V(y).
\]
The robust descent condition Eq.~\eqref{eq:robust-decrease-cond} is then equivalent to requiring:
\(
V(x) - V(y_{\text{adv}}) \geq \epsilon.
\)
This equivalence follows from the fact that if \( V(x) \) is at least \( \epsilon \) greater than all values of \( V(y) \) within the neighborhood, then it must also be greater than the maximum of these values by at least \( \epsilon \), vice versa. This formula enables us to design loss terms based either on an empirical estimate of \( V(y_{\text{adv}}) \), obtained via adversarial optimization, or on certified upper bounds derived from the Lipschitz continuity of \( V \).

\paragraph{1. Adversarial Descent Loss.}
We first use adversarial attack strategies \cite{madry2017towards, goodfellow2014explaining, croce2020reliable} to find \(\hat{y}_\text{adv}\) to approximate \(y_\text{adv}\), i.e., $V(\hat{y}_\text{adv}) \leq V(y_\text{adv})$. We design a loss term for the robust decrease condition:
\begin{equation}
\label{eq:loss-dec-adv}
  \mathcal{L}_{\text{dec-adv}} =\sum_{\substack{x \in \mathcal{X} \setminus \mathcal{X}_G \land V(x) \leq \beta} } \max\left(0, \epsilon - (V(x) - V(\hat{y}_\text{adv})\right)  
\end{equation}
When this loss term comes to \(0\), it indicates that for every sampled state, the certificate value at the current state exceeds the approximated maximum value given by adversarial attack strategies within the perturbation neighborhood by at least \( \epsilon \), thus approximately satisfying the robust CLBF condition on the training data.
The total training loss becomes:
\[
\mathcal{L}_{\text{total}} = \lambda_{\text{init}} \mathcal{L}_{\text{init}} + \lambda_{\text{dec-adv}} \mathcal{L}_{\text{dec-adv}}
\]
where \(\lambda_{\text{init}}, \lambda_{\text{dec-adv}} \in \mathbb{R}\) are the coefficients for the loss terms respectively.

There is no guarantee that adversarial attack strategies always find the worst-case perturbation. As a result, \( V(\hat{y}_\text{adv}) \) may underestimate the greatest value of \(V\) in \(\mathcal{B}_{\delta,p}(f(x, \pi(x)))\). To overcome this limitation, we leverage the global Lipschitz constants of $V$. 
We introduce the following theorem connecting the Lipschitz constant with the robustness of the Lyapunov certificate:
\begin{theorem}
    \label{thm:lip-cert}
    Given a Lyapunov Barrier Certificate \( V \) defined following \cref{def:lyapunov-barrier-certificate} with parameters \((\alpha,\beta,\epsilon_r)\), if \(V\) is Lipschitz continuous under the \( l_p \) norm with global Lipschitz constant \( L_p \), and any perturbation in the \(x_{t+1}\) is in \(\mathcal{B}_{\delta,p}(f(x_t,\pi(x_t)))\), then the robust Lyapunov condition Eq.~\eqref{eq:robust-decrease-cond} holds with parameters \( (\alpha, \beta, \epsilon_r - L_p \cdot \delta)\).
\end{theorem}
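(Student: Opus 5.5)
The argument is a direct triangle-inequality step on the decrease condition. The initialization and safety conditions, Eq.~\eqref{eq:robust-init-cond} and Eq.~\eqref{eq:robust-safety-cond}, coincide with Eq.~\eqref{eq:init-cond} and Eq.~\eqref{eq:safety-cond}. They therefore carry over verbatim with the same \(\alpha\) and \(\beta\), and only Eq.~\eqref{eq:robust-decrease-cond} needs proof.

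Fix any \(x \in \mathcal{X}\setminus\mathcal{X}_G\) with \(V(x)\le\beta\), and any \(y \in \mathcal{B}_{\delta,p}(f(x,\pi(x)))\). I will split the difference through the nominal next state:
\[
V(x) - V(y) = \bigl(V(x) - V(f(x,\pi(x)))\bigr) + \bigl(V(f(x,\pi(x))) - V(y)\bigr).
\]
The first bracket is at least \(\epsilon_r\) by the nominal decrease condition Eq.~\eqref{eq:decrease-cond}, which applies because \(x\) lies in the precondition set. For the second bracket, global Lipschitz continuity under \(\|\cdot\|_p\) gives
\[
|V(f(x,\pi(x))) - V(y)| \le L_p \|f(x,\pi(x)) - y\|_p \le L_p\delta,
\]
so the second bracket is at least \(-L_p\delta\). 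Adding the two bounds yields \(V(x)-V(y)\ge \epsilon_r - L_p\delta\) for all such \(x,y\), which is exactly Eq.~\eqref{eq:robust-decrease-cond} with \(\epsilon = \epsilon_r - L_p\delta\).

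The only delicate point is bookkeeping rather than analysis. \cref{def:robust-lyapunov-barrier-certificate} requires \(\epsilon>0\), so the conclusion is meaningful as a robust certificate only when \(\delta < \epsilon_r/L_p\); I will state this as an implicit assumption, which also yields a certified radius. I also need \(V\) to be defined, and Lipschitz, at the perturbed points \(y\), which may leave the nominal domain. Taking \(V\) defined on all of \(\mathbb{R}^n\), as for a neural network, or on \(\mathcal{X}\) closed under these perturbations, removes this issue. When \(p\neq 2\), \(L_p\) can be bounded via Eq.~\eqref{eq:lip-diff-norm} and Eq.~\eqref{eq:lip-matrix-norm}, but the theorem itself only needs some valid \(L_p\).
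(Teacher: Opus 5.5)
Your proof is correct and matches the paper's argument: both route through the nominal next state \(f(x,\pi(x))\). The nominal decrease condition gives \(V(f(x,\pi(x))) \le V(x) - \epsilon_r\), Lipschitz continuity gives \(V(y) \le V(f(x,\pi(x))) + L_p\delta\), and combining the two yields the claim. Your side remarks are sound bookkeeping that the paper leaves implicit: \(\delta < \epsilon_r/L_p\) is needed for \(\epsilon>0\), and \(V\) must be defined and Lipschitz at the perturbed points \(y\).
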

\textit{Proof sketch.} The Lipschitz continuity of \(V\) implies that for any perturbed next state \(y\) in the \(\delta\)-ball around \(f(x,\pi(x))\), \(V(y)\) is at most \(L_p \cdot \delta\) larger than \(V(f(x, \pi(x)))\). Subtracting this from \(\epsilon_r\) ensures that the robust decrease condition holds. The full proof is provided in Appendix~\ref{appendix:lip-cert-proof}.

\paragraph{2. Lipschitz-bound Descent Loss.}
 Instead of empirically searching for \( y_\text{adv} \), we bound the maximum  of \( V(y) \) for \( y \in \mathcal{B}_{\delta,p}( f(x,\pi(x))) \) using global Lipschitz bound \( L_p\). This gives a modified descent loss:
\begin{equation}
\label{eq:loss-dec-neighbor}
    \mathcal{L}_{\text{dec-neighbor}} =\sum_{x \in \mathcal{X} \setminus \mathcal{X}_G \land V(x) \leq \beta}  \max\left(0, \epsilon - (V(x) - V(f(x,\pi(x))) - L_p \cdot \delta) \right) .
\end{equation}

\begin{theorem}
\label{thm:lip-neighbor-loss}
    If \( \mathcal{L}_\text{dec-neighbor} = 0 \), then \( Eq.~\eqref{eq:robust-decrease-cond} \) is satisfied on all sampled data.
\end{theorem}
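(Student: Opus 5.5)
The plan is to unpack the hinge loss term by term and then apply the Lipschitz bound on the perturbation ball, essentially as in the proof sketch of \cref{thm:lip-cert}.

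\textbf{Step 1: zero loss gives a sharpened nominal decrease.} Each summand of $\mathcal{L}_{\text{dec-neighbor}}$ in Eq.~\eqref{eq:loss-dec-neighbor} is nonnegative, being of the form $\max(0,\cdot)$. So $\mathcal{L}_{\text{dec-neighbor}}=0$ forces every summand to vanish. Hence, for every sampled $x \in \mathcal{X}\setminus\mathcal{X}_G$ with $V(x)\le\beta$,
\[
V(x) - V(f(x,\pi(x))) - L_p\cdot\delta \ \ge\ \epsilon .
\]

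\textbf{Step 2: bound $V$ on the perturbation ball.} Fix such a sampled $x$ and an arbitrary $y \in \mathcal{B}_{\delta,p}(f(x,\pi(x)))$. Since $L_p$ is a global Lipschitz constant of $V$ with respect to $\|\cdot\|_p$,
\[
V(y) - V(f(x,\pi(x))) \ \le\ L_p \,\|y - f(x,\pi(x))\|_p \ \le\ L_p\,\delta .
\]
In practice $L_p$ is any valid upper bound, for instance the one obtained from Eq.~\eqref{eq:lip-matrix-norm} and Eq.~\eqref{eq:lip-diff-norm}; the inequality only improves if the true constant is smaller.

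\textbf{Step 3: combine.} Substituting the bound from Step 2 into the inequality from Step 1 gives
\[
V(x) - V(y) \ \ge\ V(x) - V(f(x,\pi(x))) - L_p\delta \ \ge\ \epsilon
\]
for all $y$ in the ball. This is exactly Eq.~\eqref{eq:robust-decrease-cond} at the sampled point $x$. Equivalently, it shows $V(x) - V(y_{\text{adv}}) \ge \epsilon$, using the equivalence with $y_{\text{adv}}$ noted earlier.

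The argument has no real obstacle. The only points needing care are the quantifiers, since the claim is restricted to sampled data satisfying the precondition $V(x)\le\beta$, $x\notin\mathcal{X}_G$, and the requirement that $L_p$ in the loss is a genuine upper bound on the Lipschitz constant of $V$.
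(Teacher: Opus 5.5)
Your proof is correct and follows the paper's argument: you unpack the zero hinge loss into $V(x) - V(f(x,\pi(x))) - L_p\delta \ge \epsilon$ for each sampled $x$, bound $V(y) \le V(f(x,\pi(x))) + L_p\delta$ on the ball using the Lipschitz constant, and combine the two. Your intermediate bound carries the correct sign $+L_p\delta$; the paper's displayed version writes $V(y) \le V(x_{\text{next}}) - L_p\cdot\delta$, which is a typo.
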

\textit{Proof sketch.} When the loss is zero, we have \(V(x) - V(f(x,\pi(x))) \geq \epsilon + L_p \cdot \delta\). By \cref{thm:lip-cert}, this guarantees that \(V(x) - V(y) \geq \epsilon\) for all \(y \in \mathcal{B}_{\delta,p}(f(x,\pi(x)))\), thereby satisfying the robust condition. Full proof is in Appendix~\ref{appendix:lip-bound-proof}.

We do not explicitly constrain the global Lipschitz constant of the certificate function \(V\). Instead, we enforce a stronger descent condition by leveraging the connection between the Lipschitz continuity of a function and its local value differences. 
The total training loss becomes:
\[
\mathcal{L}_{\text{total}} = \lambda_{\text{init}} \mathcal{L}_{\text{init}} + \lambda_{\text{dec-neighbor}} \mathcal{L}_{\text{dec-neighbor}}
\]
where \(\lambda_{\text{init}}, \lambda_{\text{dec-neighbor}} \in \mathbb{R}\) are the coefficients for the loss terms, respectively.

\paragraph{3. Lipschitz Regularization Loss.}
\label{para:lip-reg-loss}
While the loss above provides robustness for training data, it does not provide certified guarantees for all the states in \( \mathcal{X}\). In safety-critical systems, however, we want robustness guarantees over the entire state space. To bridge this gap, we introduce the Lipschitz Regularization approach to achieve certifiable robustness on the full space. 
As given by Eq.~\eqref{eq:lip-matrix-norm}, we use the product of spectral norms of each layer to upper bound the global Lipschitz constant in \(l_2\) norm. To enforce an upper bound \( \tau \) on the global Lipschitz constant, we use:
\begin{equation}
\label{eq:loss-lip-global}
    \mathcal{L}_{\text{lip-global}} = \max\left(0, \prod_{k=1}^\ell \|W_k\|_2 - \tau\right).
\end{equation}

When this loss term reaches \(0\), we have \( \prod_{k=1}^\ell \|W_k\|_2 \leq \tau \). According to Eq.~\eqref{eq:lip-diff-norm}, the Lipschitz constant under the \( l_p \) norm is bounded by \(L_p = K_{p,2} \cdot \tau\),
where \( K_{p,2} \) is the norm conversion constant. From \cref{thm:lip-cert}, if the Lyapunov Barrier certificate \( V \) satisfies the standard conditions in \cref{def:lyapunov-barrier-certificate} and has its Lipschitz constant bounded by \( L_p \), then \( V \) also satisfies the robust Lyapunov Barrier Certificate definition in \cref{def:robust-lyapunov-barrier-certificate} with \( \epsilon = \epsilon_r - \delta \cdot L_p \). We aim to minimize the Lipschitz constant \( L_p \) because the robust condition \( \epsilon > 0 \) implies the constraint \( \delta < \epsilon_r / L_p \). This means that a smaller \( L_p \) allows for a larger admissible perturbation radius \( \delta \), thus improving the certified robustness. To determine the smallest achievable Lipschitz bound \( \tau \) that still satisfies the standard certificate conditions Eq.~\eqref{eq:init-cond}--\eqref{eq:safety-cond}, we perform a binary search of 
\(\tau\) between a lower bound of \(0\) and an upper bound given by the Lipschitz constant of a vanilla model trained to satisfy conditions the standard conditions. Since this vanilla model is already known to satisfy the standard conditions, the upper bound is guaranteed to be feasible. Since our algorithm ensures the Lipschitz bound \( \tau \) is met and the certificate conditions can be proved to be satisfied using the verifier when it finishes, we can guarantee that the resulting certificate is robust under perturbations of radius \( \delta \).

In the training process, we use the original training loss \(\mathcal{L}_\text{init}\) and \(\mathcal{L}_\text{dec}\) to satisfy the condition in Eq.~\eqref{eq:init-cond}--\eqref{eq:safety-cond}, and add the loss term \(\mathcal{L}_\text{lip-global}\) to upper bound the global Lipschitz constant. The total training loss becomes:
\[
\mathcal{L}_{\text{total}} = \lambda_{\text{init}} \mathcal{L}_{\text{init}} + \lambda_{\text{dec}} \mathcal{L}_{\text{dec}} + \lambda_{\text{lip-global}} \mathcal{L}_{\text{lip-global}}
\]
where \(\lambda_{\text{init}}, \lambda_{\text{dec}}, \lambda_\text{lip-global} \in \mathbb{R}\) are the coefficients for the loss terms, respectively. Since we have the loss to enforce Eq.~\eqref{eq:init-cond}--\eqref{eq:safety-cond} are satisfied, we do not need to explicitly enforce the robust decrease condition Eq.~\eqref{eq:robust-decrease-cond}. Instead, according to \cref{thm:lip-cert}, by bounding the Lipschitz constant and satisfying the nominal decrease condition, we ensure that the robust condition is also satisfied.

\section{Experiments}
In this section, we first describe the experimental setup. We then present results on both certified and empirical robustness, followed by an ablation study to analyze the impact of key hyperparameters. Each experiment is run multiple times, and detailed statistics are provided in Appendix~\ref{appendix:exp-results}.
\subsection{Experiment Setup}
We validate our method on two reinforcement learning environments: the \textit{2D Docking} task, which captures important safety-critical behaviors relevant to autonomous systems, and the \textit{Inverted Pendulum}, a popular benchmark in control theory. A detailed case study is provided in Appendix~\ref{appendix:case-study}. Following prior work in \cite{vzikelic2023learning, mandal2024formally}, We design the policy network for the 2D docking task with two hidden layers of 20 neurons each, and the policy for the inverted pendulum environment with two hidden layers of 128 neurons each, both using ReLU activations. The certificate network consists of three hidden layers with 64, 32, and 16 neurons respectively, also using ReLU activations.
All experiments were conducted on a server equipped with 12 cores of Intel Xeon Silver 4214R CPUs, 187 GB of RAM, and running Ubuntu 22.04. 

We set \( \beta = 1 \) in both environments, and mask the certificate value such that \( V(x) = -10 \) for goal states \( x \in \mathcal{X}_G \) and \( V(x) = 1.2 \) for unsafe states \( x \in \mathcal{X}_U \). The descent margin \( \epsilon \) is selected by training the non-robust Lyapunov Barrier Certificate (satisfying only Eq.~\eqref{eq:init-cond}--\eqref{eq:safety-cond}) with the largest possible margin until the certificate cannot be learned. We then keep this value fixed for robust training: \( \epsilon = 10^{-2} \) for 2D Docking and \( 5 \times 10^{-3} \) for Inverted Pendulum. For each method, we binary search for the largest Lipschitz bound \( \tau \) and neighborhood radius \( \delta \) that allow training to converge.
In the training process, we set \( \lambda_{\text{init}} = 1, \lambda_{\text{lip-global}}=1 \), while all other loss weights (i.e., \( \lambda_{\text{dec}} \), \( \lambda_{\text{dec-adv}} \), \( \lambda_{\text{dec-neighbor}} \)) are set to \(10\), depending on the specific training objective being used. We conducted our experiment on \(l_\infty\) norm. In the adversarial training method, we use Projected Gradient Descent (PGD)~\cite{madry2017towards} to approximate the value of \( y_{\text{adv}} \), the point in the neighborhood where \( V \) is maximized. 

In the CEGIS loop, we train until a loss of 0 is achieved and then use the verification step to find counterexamples. We repeat this until there are no more counterexamples or a timeout (12 hours) is reached. We use the Marabou verifier~\cite{wu2024marabou} in the verification step. When the system dynamics involve operations that cannot be precisely analyzed by existing DNN verifiers (e.g., trigonometric functions),  we encode piecewise-linear bounds that over-approximate the dynamics. The bounds are constructed with OVERT \cite{sidrane2022overt}, with the number of piecewise-linear segments set to 3 and an absolute approximation error tolerance set to \( \epsilon = 10^{-5} \). 

\subsection{Certified robustness} \label{subsec:certified}
\begin{table}[t]
\centering
\small
\renewcommand{\arraystretch}{1.2}
\begin{subtable}{0.48\textwidth}
\centering
\begin{adjustbox}{max width=\textwidth}
\begin{tabular}{@{}lr@{}}
    \toprule
    \textbf{Method} & \textbf{Certified Bound} \\
    \midrule
    Vanilla        & 0.0015 \\
    Lip-3          & \textbf{0.0084} \\
    Lip-Neighbor   & 0.0033 \\
    PGD            & 0.0078 \\
    \bottomrule
\end{tabular}
\end{adjustbox}
\caption{Inverted Pendulum environment}
\label{tab:pendulum-cert}
\end{subtable}
\hfill
\begin{subtable}{0.48\textwidth}
\centering
\begin{adjustbox}{max width=\textwidth}
\begin{tabular}{@{}lr@{}}
    \toprule
    \textbf{Method} & \textbf{Certified Bound} \\
    \midrule
    Vanilla        & 0.0094 \\
    Lip-1          & \textbf{0.0172} \\
    Lip-Neighbor   & 0.0156 \\
    PGD            & 0.0131 \\
    \bottomrule
\end{tabular}
\end{adjustbox}
\caption{2D Docking environment}
\label{tab:docking-cert}
\end{subtable}
\caption{Certified perturbation bounds for different training methods. \textbf{Method:} Vanilla denotes a baseline model trained using the standard method without robustness. Lip-\(n\) corresponds to training with Lipschitz bound \(\tau = n\) as in Eq.~\eqref{eq:loss-lip-global}. Lip-Neighbor uses a neighborhood margin of \( \delta = 0.01 \) (2D docking) or \( \delta = 10^{-3} \) (Inverted pendulum) as in Eq.~\eqref{eq:loss-dec-neighbor}. PGD is trained with adversarial examples in the perturbation range of \( \delta = 0.01 \) (2D docking) or \( \delta = 5 \times 10^{-3} \).}
\label{tab:certified-bounds}
\vspace{-7mm}
\end{table}
Certifiable robustness is evaluated using the Marabou DNN verifier~\cite{wu2024marabou} with a binary search procedure to find the largest \(\delta\) for which Eq.~\eqref{eq:robust-init-cond}--\eqref{eq:robust-safety-cond} hold with \(\epsilon = 10^{-6}\). We repeat the binary search procedure until the interval length is below \(10^{-4}\), and we then report the lower bound as the certified perturbation. Tables~\ref{tab:pendulum-cert} and~\ref{tab:docking-cert} show the certified bounds for the controller in Inverted Pendulum and 2D Docking environments, representing the maximum perturbation under which the controller provably complete in the robust RWA task. Compared to the baseline (Vanilla), our robust training methods improve certified robustness. In the Inverted Pendulum environment, Lip-3 improves the bound by 460\% (from 0.0015 to 0.0084), while in 2D Docking, Lip-1 increases it by 83\%. Since perturbations are measured under the \(l_\infty\) norm, certified volume grows exponentially with state dimension—Lip-3 increases the volume over 30× in 2D, and Lip-1 achieves over 10× in 4D.

\subsection{Empirical robustness} \label{subsec:empirical}
In addition to certifiable bounds, we also evaluate empirical robustness against both adversarial and random perturbations. After training, we simulate the controllers under adversarial perturbations and random perturbations. For adversarial perturbations, we apply attacks based on maximizing the certificate value in each step; for random perturbations, we apply bounded noise to the state computed by the environment at each time step. Tables~\ref{tab:pendulum-success} and~\ref{tab:docking-success} show the empirical success rates across perturbation ranges in both environments. A trajectory is considered successful if it reaches the goal region within 200 steps without entering the unsafe region. We report the success rate over 10,000 randomly sampled states that are in initial region but outside of the goal region. For each method, we train two models and report the average success rate.

In both environments, all proposed methods significantly outperform the Vanilla baseline. In the Inverted Pendulum environment, under a perturbation of $0.01$, Lip-3 and Lip-Neighbor achieve success rates of $70\%$ and $69\%$, nearly doubling the baseline's $39\%$, while PGD achieves the highest success at $89\%$. In the 2D Docking environment, both Lipschitz-based methods (Lip-1 and Lip-Neighbor) maintain $100\%$ success up to $0.03$ perturbation—$2.4\times$ better than Vanilla ($41\%$)—and retain strong performance at $0.05$ ($83\%$ and $86\%$). These comparisons clearly indicate that our methods not only exceed Vanilla performance within its certifiable perturbation radius, but also generalize substantially better under stronger perturbations.

PGD-trained models achieve high success rates within the perturbation radius they were trained on, but their performance significantly drops under larger perturbations. This highlights a well-known limitation of adversarial training that it does not generalize well to unseen, larger perturbations.
On the other hand, methods that incorporate Lipschitz constants show improved robustness across a wide range of perturbations. This is because these methods either implicitly (Lipschitz-Neighbor, bounding the maximum value using Lipschitz constant) or explicitly (Lip-n, upper-bounding the Lipschitz constant by n) enforce a smaller Lipschitz bound, encouraging smoother behavior in the neural network. As a result, small perturbations in the state lead to only minor changes in the certificate value. When the certificate value remains close to the value before perturbed, the decrease condition is more likely to hold, thereby improving the system's ability to remain safe and reach the goal state.

\begin{table}[t]
    \centering
    \renewcommand{\arraystretch}{1.2}
    
    \begin{subtable}[t]{0.48\textwidth}
        \centering
        \begin{adjustbox}{max width=\textwidth}
        \begin{tabular}{@{}lrrrr@{}}
            \toprule
            \textbf{Method} & \textbf{0.005 Adv} & \textbf{0.008 Adv} & \textbf{0.01 Adv} & \textbf{0.05 Rand} \\
            \midrule
            Vanilla         & 72\% & 41\% & 39\% & 100\% \\
            Lip-3           & \textbf{100\%} & \textbf{100\%} & 70\% & 100\% \\
            Lip-Neighbor    & 78\% & 68\% & 69\% & 100\% \\
            PGD             & \textbf{100\%} & \textbf{100\%} & \textbf{89\%} & 100\% \\
            \bottomrule
        \end{tabular}
        \end{adjustbox}
        \caption{Inverted Pendulum environment}
        \label{tab:pendulum-success}
    \end{subtable}
    \hfill
    \begin{subtable}[t]{0.48\textwidth}
        \centering
        \begin{adjustbox}{max width=\textwidth}
        \begin{tabular}{@{}lrrrr@{}}
            \toprule
            \textbf{Method} & \textbf{0.01 Adv} & \textbf{0.03 Adv} & \textbf{0.05 Adv} & \textbf{0.05 Rand} \\
            \midrule
            Vanilla         & 100\% & 41\% & 34\% & 100\% \\
            Lip-1           & 100\% & \textbf{100\%} & 83\% & 100\% \\
            Lip-Neighbor    & 100\% & \textbf{100\%} & \textbf{86\%} & 100\% \\
            PGD             & 100\% & 51\% & 17\% & 100\% \\
            \bottomrule
        \end{tabular}
        \end{adjustbox}
        \caption{2D Docking environment}
        \label{tab:docking-success}
    \end{subtable}
    
    \caption{Success rates (\%) under various adversarial and random perturbations in both environments. "$x$ Adv" indicates adversarial perturbations with $\ell_\infty$ norm bound $x$ applied using PGD attacks. "$x$ Rand" denotes uniformly random perturbations within the range applied at each time step.}
    \label{tab:empirical-success}
    \vspace{-5mm}
\end{table}

\subsection{Ablation Study}
We conduct an ablation study on the 2D docking setting to understand the influence of key hyperparameters on training performance and certified robustness. Full numerical results are in Appendix~\ref{appendix:ablation}.

\textbf{Effect of Robust Descent Margin \(\epsilon\).}
We vary \(\epsilon \in \{0.001, 0.005, 0.01\}\) in the Lipschitz Regularization method. We find that smaller margins lead to lower certified robustness with roughly the same training time, validating our approach of choosing the largest feasible \(\epsilon\) from the vanilla model.

\textbf{Effect of Lipschitz Upper Bound \(\tau\).}
We evaluate several values of \(\tau\) in the Lipschitz Regularization method. Lower \(\tau\) generally leads to improved certified robustness, consistent with the theoretical guarantee that a smaller Lipschitz constant allow certification under larger perturbations. However, setting \(\tau\) too small makes the learning process difficult to converge.

\textbf{Effect of Loss Weight \(\lambda_{\text{lip-global}}\).}
To study the impact of the Lipschitz loss weight in the Lipschitz Regularization method, we fix \(\lambda_{\text{init}} = 1\), \(\lambda_{\text{dec}} = 10\) and vary \(\lambda_{\text{lip}} \in \{0.5, 1, 5, 10, 20\}\). We observe that training time first decreases and then increases, and certified robustness first decreases and then increases. This suggests the balance between regularization strength and optimization dynamics.

\textbf{Effect of Counterexample Weighting \(w\).}
During CEGIS training, we test different loss weight ratios \(w\) between original and counterexample where \(w \in \{100,500,1000\}\). While giving higher weight to counterexamples helps training converge fast, setting the ratio too high reduces overall robustness, probably due to overfitting to the counterexamples.

\section{Conclusion and Limitations}
In this paper, we studied the problem of synthesizing robust neural Lyapunov barrier certificate, where the validity of the certificate holds even under norm-bounded time-varying perturbations in the system dynamics. We established the connection between the robustness of the certificate and Lipschitz continuity, and proposed practical training methods, including adversarial training and Lipschitz-based methods, to enforce robustness for synthesizing robust certificates. We showed that our approach improved both certifiable and empricial robustness in two environments: Inverted Pendulum, a popular benchmark for control, and 2D Docking, a safety-critical task relevant to autonomous systems.
A limitation of our work is its focus on robustness to perturbations in the states only. In practice, robustness may be required against a broader range of uncertainty, including perturbations in the system dynamics, observations, or deviations in the controller’s output. Addressing these sources of uncertainty is an important direction for future work, and could enhance the framework's effectiveness in real-world settings, where sim-to-real gaps often arise from such unmodeled variations.

\newpage
\bibliography{reference}

\newpage
\appendix
\section{Additional Related Work}
\label{appendix:related-work}

\subsection{Lipschitz Continuity}

Lipschitz continuity has been widely explored in the machine learning literature as a tool to improve model robustness, particularly in classification tasks. Many works use Lipschitz constant to partially explain the sensitivity of neural networks \cite{szegedy2013intriguing} or enforce global Lipschitz bound during training to improve robustness ~\cite{tsuzuku2018lipschitz,leino2021globally, cisse2017parseval}. However, these approaches are not directly applicable to control systems or Lyapunov-based verification, where the goal is not classification accuracy but certified state-space behavior over time. Our use of Lipschitz continuity focuses on certifying robustness under state perturbations in dynamic systems.

\subsection{Safe Reinforcement Learning}
Robust RL methods aim to improve policy performance under perturbations. Some approaches adopt the robust Markov Decision Process formulation\cite{wiesemann2013robust,lim2013reinforcement}, defining uncertainty sets around nominal dynamics~\cite{wang2023robust,clavier2022robust} and solve robust Bellman updates. A more widely adopted perspective models robustness as a minimax game against adversaries, first introduced in~\cite{morimoto2005robust}. Adversarial training frameworks build on this idea, employing single~\cite{pinto2017robust, tessler2019action}, population~\cite{vinitsky2020robust}, or risk-aware adversaries~\cite{pan2019risk} to simulate worst-case perturbations on policies and improve robustness. Other lines of work consider adversarial perturbations in the observation space~\cite{zhang2020robust, zhang2021robust, stanton2021robust}. While effective for improving empirical performance, these methods typically do not provide formal, trajectory-level robustness guarantees. In contrast, our approach certifies safety and stability by synthesizing neural Lyapunov-barrier functions that provably satisfy robustness under norm-bounded perturbations.

\section{Proof of Theorems}
\subsection{Norm Inequality Derivation}
\label{appendix:norm-inequality}

To generalize to other \( l_p \) norms, we use the following norm inequality:
\[
\|x\|_p \leq n^{\left(\frac{1}{p} - \frac{1}{q}\right)} \|x\|_q.
\]
Applying this to both the input and output spaces, we obtain a Lipschitz constant for the function \( f \) in \( l_q \)-input and \( l_p \)-output norms:
\begin{align}
    \|f(y) - f(x)\|_p 
    &\leq m^{\left(\frac{1}{p} - \frac{1}{2}\right)} \|f(y) - f(x)\|_2 \\
    &\leq m^{\left(\frac{1}{p} - \frac{1}{2}\right)} L_2 \|y - x\|_2 \\
    &\leq m^{\left(\frac{1}{p} - \frac{1}{2}\right)} L_2 \cdot n^{\left(\frac{1}{2} - \frac{1}{q}\right)} \|y - x\|_q,
\end{align}
where \( n \) and \( m \) are the input and output dimensions of \( f \) respectively, and \( L_2 \) is the global Lipschitz constant of \( f \) under the \( l_2 \) norm.
Thus, the overall bound becomes:
\begin{align}
    \|f(y) - f(x)\|_p 
    \leq n^{\left(\frac{1}{2} - \frac{1}{q}\right)} m^{\left(\frac{1}{p} - \frac{1}{2}\right)} L_2 \|y - x\|_q.
\end{align}

\subsection{Proof of \cref{thm:appendix-robust-rwa-simple}}
\label{appendix:robust-rwa-proof}
\begin{theorem}[Robust RWA Guarantee]
\label{thm:appendix-robust-rwa-simple}
Let \( \{x_t\}_{t=0}^\infty \) be a trajectory generated by policy \( \pi \) and transition function \( f \), and the perturbation at each state is less than \(\delta\) under \(l_p\) norm , i.e., \( x_{t+1} \in \mathcal{B}_{\delta, p}(f(x_t,\pi(x_t))) \). If there exists a \( (\delta, p) \)-Robust Lyapunov Barrier Certificate \( V : \mathcal{X}_s \rightarrow \mathbb{R}_{\geq c}\) where \(\mathcal{X}_s = \left\{ x_\tau \mid \forall t \in [0, \tau-1]. x_t \not\in \mathcal{X}_G \right\} \subset \mathcal{X}\) with parameters \( (\alpha, \beta, \epsilon) \), satisfies condition Eq.~\eqref{eq:robust-init-cond}--\eqref{eq:robust-safety-cond}, then the trajectory:
    (i) will reach the goal region \( \mathcal{X}_G \) in finite time and
    (ii) will never enter the unsafe region \( \mathcal{X}_U \),
thereby completing the robust RWA task under state-wise perturbations.
\end{theorem}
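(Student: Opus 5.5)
The plan is an induction along the trajectory that keeps every pre-goal state in the sublevel set $\{V \le \beta\}$. We then use the robust decrease condition Eq.~\eqref{eq:robust-decrease-cond} to get a strict $\epsilon$-decrease at each step, and finish with a lower-boundedness argument for liveness and the threshold gap $\alpha > \beta$ for safety.

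Let $\tau^\ast = \inf\{t \ge 0 : x_t \in \mathcal{X}_G\}$, with $\tau^\ast = \infty$ if the goal is never reached. By definition of $\mathcal{X}_s$, every $x_t$ with $t \le \tau^\ast$ lies in $\mathcal{X}_s$, so $V(x_t)$ is defined and satisfies $V(x_t) \ge c$.

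\textbf{Invariant.} The claim is that for all $t < \tau^\ast$ (and for $t=\tau^\ast$ when finite),
\[
V(x_t) \le \beta - t\epsilon .
\]
\emph{Base case.} Since $x_0 \in \mathcal{X}_I$, Eq.~\eqref{eq:robust-init-cond} gives $V(x_0) \le \beta$.
\emph{Inductive step.} Suppose $t < \tau^\ast$ and $V(x_t) \le \beta - t\epsilon \le \beta$. Then $x_t \in \{x \in \mathcal{X}\setminus\mathcal{X}_G \mid V(x) \le \beta\}$. The hypothesis puts $x_{t+1}$ in $\mathcal{B}_{\delta,p}(f(x_t,\pi(x_t)))$, so Eq.~\eqref{eq:robust-decrease-cond} with $y = x_{t+1}$ gives $V(x_{t+1}) \le V(x_t) - \epsilon \le \beta - (t+1)\epsilon$. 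This closes the induction. In fact the same inequality holds for \emph{every} $y$ in that ball, not just the realized next state; this is what gives the stronger per-ball safety in the robust RWA definition.

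\textbf{Liveness (i).} Suppose $\tau^\ast = \infty$. Then $c \le V(x_t) \le \beta - t\epsilon$ for all $t$, which fails once $t > (\beta - c)/\epsilon$. Hence $\tau^\ast \le \lceil (\beta - c)/\epsilon \rceil$ is finite.

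\textbf{Safety (ii).} For $t \le \tau^\ast$, the invariant gives $V(x_t) \le \beta < \alpha$. If $x_t \in \mathcal{X}_U$, then Eq.~\eqref{eq:robust-safety-cond} would force $V(x_t) \ge \alpha$, a contradiction, so $x_t \notin \mathcal{X}_U$. The same bound holds for every $y \in \mathcal{B}_{\delta,p}(f(x_t,\pi(x_t)))$ with $t < \tau^\ast$: such $y$ satisfy $V(y) \le \beta - \epsilon < \alpha$, so the whole perturbation ball avoids $\mathcal{X}_U$.

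\textbf{Main obstacle.} The delicate step is domain bookkeeping rather than any estimate. $V$ is only defined on $\mathcal{X}_s$, so the safety condition Eq.~\eqref{eq:robust-safety-cond} can only be applied at points where $V$ exists. For points $y$ in the ball that are never visited, the argument has to read the conditions as applying to those $y$ as well, i.e. take the domain to include them. I would state explicitly that Eq.~\eqref{eq:robust-decrease-cond} is interpreted over the reachable perturbed states before the goal. I would also note that states after $\tau^\ast$ are irrelevant, since the task is complete at $\tau^\ast$; the claim ``never enter $\mathcal{X}_U$'' is therefore read on $[0,\tau^\ast]$.
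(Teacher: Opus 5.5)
Your proof is correct and follows the paper's argument: a contradiction with the lower bound $c$ gives liveness, and the chain $V(x_t) < \cdots < V(x_0) \le \beta < \alpha$ gives safety. Your explicit invariant $V(x_t) \le \beta - t\epsilon$ supplies a step the paper leaves implicit, namely that each pre-goal $x_t$ satisfies the precondition $V(x_t)\le\beta$ of Eq.~\eqref{eq:robust-decrease-cond}; your extension to the whole perturbation ball goes beyond what the paper proves, and it rightly depends on the domain caveat you flag.
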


\begin{proof}
To prove (i), assume by contradiction that the trajectory never reaches \( \mathcal{X}_G \). Then \( x_t \in \mathcal{X}_s \) for all \( t \in \mathbb{N} \), and by Eq.~\eqref{eq:robust-decrease-cond} we have:
\[
V(x_{t}) - V(x_{t+1}) \geq \epsilon > 0.
\]
This implies \( V(x_t) \) strictly decreases by at least \( \epsilon_r \) at every step. Since \( V(x) \geq c \) for all \( x \in \mathcal{X}_s \), and \( V(x_0) \leq \beta \), the number of steps before reaching a contradiction is at most
\[
T \leq \frac{\beta - c}{\epsilon}.
\]
Therefore, the trajectory must reach \( \mathcal{X}_G \) in finite time.

To prove (ii), let \( \tau \) be the first time \( x_\tau \in \mathcal{X}_G \). For all \( t \leq \tau \), we have \( x_t \in \mathcal{X}_s \), and from Eq.~\eqref{eq:robust-decrease-cond}:
\[
V(x_t) < V(x_{t-1}) < \cdots < V(x_0) \leq \beta.
\]
Since \( \beta < \alpha \), it follows that \( V(x_t) < \alpha \), so \( x_t \notin \mathcal{X}_U \). Thus, the trajectory avoids the unsafe region at all times.
\end{proof}

\subsection{Proof of \cref{thm:appendix-lip-neighbor-loss}}
\label{appendix:lip-bound-proof}
\begin{theorem}
\label{thm:appendix-lip-neighbor-loss}
\( \mathcal{L}_\text{dec-neighbor} \) is defined by: 
\[
\mathcal{L}_{\text{dec-neighbor}} =\sum_{x \in \mathcal{X} \setminus \mathcal{X}_G \land V(x) \leq \beta}  \max\left(0, \epsilon - (V(x) - V(f(x,\pi(x))) - L_p \cdot \delta) \right) .
\]

If \( \mathcal{L}_\text{dec-neighbor} = 0 \), then \( Eq.~\eqref{eq:robust-decrease-cond} \) is satisfied on all sampled data.

\end{theorem}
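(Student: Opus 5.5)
The approach is to unpack the hinge loss termwise and then invoke the Lipschitz bound, in the same spirit as \cref{thm:lip-cert}.

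First, each summand $\max(0,\cdot)$ in $\mathcal{L}_{\text{dec-neighbor}}$ is nonnegative. So $\mathcal{L}_{\text{dec-neighbor}}=0$ forces every summand to vanish. That means, for every sampled $x\in\mathcal{X}\setminus\mathcal{X}_G$ with $V(x)\le\beta$,
\[
\epsilon - \bigl(V(x) - V(f(x,\pi(x))) - L_p\delta\bigr) \le 0,
\quad\text{i.e.}\quad
V(x) - V(f(x,\pi(x))) \ge \epsilon + L_p\delta .
\]
In words, the nominal decrease condition Eq.~\eqref{eq:decrease-cond} holds on the sample with the enlarged margin $\epsilon_r := \epsilon + L_p\delta$.

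Second, fix such an $x$ and any $y\in\mathcal{B}_{\delta,p}(f(x,\pi(x)))$. Because $L_p$ is a global Lipschitz constant of $V$ under $\|\cdot\|_p$, we have
\[
V(y) - V(f(x,\pi(x))) \le |V(y)-V(f(x,\pi(x)))| \le L_p\|y - f(x,\pi(x))\|_p \le L_p\delta .
\]
Combining this with the first step gives
\[
V(x)-V(y) = \bigl(V(x)-V(f(x,\pi(x)))\bigr) - \bigl(V(y)-V(f(x,\pi(x)))\bigr) \ge \epsilon + L_p\delta - L_p\delta = \epsilon .
\]
This is exactly Eq.~\eqref{eq:robust-decrease-cond} at $x$, for every $y$ in the ball.

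Equivalently, one can apply \cref{thm:lip-cert} with $\epsilon_r=\epsilon+L_p\delta$, restricted to the sampled points. The only care needed is that \cref{thm:lip-cert} is stated for a full certificate, while here we only have the pointwise inequality on the data. For that reason I would use the direct two-line argument above rather than cite the theorem verbatim.

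There is no real obstacle. The one point to state explicitly is that the $L_p$ appearing in the loss is a valid upper bound on the true Lipschitz constant, for instance obtained via Eq.~\eqref{eq:lip-matrix-norm} and Eq.~\eqref{eq:lip-diff-norm}. The conclusion holds exactly when that is so, and any valid upper bound suffices.
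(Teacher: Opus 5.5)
Your proposal is correct and follows the paper's proof essentially step for step: a zero hinge loss gives $V(x)-V(f(x,\pi(x)))\ge\epsilon+L_p\delta$ on each sampled $x$, Lipschitz continuity gives $V(y)-V(f(x,\pi(x)))\le L_p\delta$ on the ball, and subtracting the two yields Eq.~\eqref{eq:robust-decrease-cond}. Your intermediate bound $V(y)\le V(x_{\text{next}})+L_p\delta$ also has the correct sign, whereas the paper's displayed version has a sign typo, and your remark that the $L_p$ in the loss must be a genuine upper bound on $V$'s Lipschitz constant is a sensible hypothesis to state explicitly.
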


\begin{proof}
Let \( x_{\text{next}} = f(x, \pi(x)) \). By the definition of the Lipschitz constant of \( V \) under the \( l_p \)-norm, we have

\[
\forall y \in \mathcal{B}_{\delta,p}(x_{\text{next}}),~
\|V(y) - V(x_{\text{next}})\|_p \leq L_p \cdot \|y - x_{\text{next}}\|_p \leq L_p \cdot \delta.
\]
Since \( V(y), V(x_{\text{next}}) \in \mathbb{R} \), we have:
\(
V(y) - V(x_\text{next}) \leq \|V(y) - V(x_{\text{next}})\|_p
\).
Thus,
\begin{equation}
\label{eq:proof-lip-neighbor}
     V(y) \leq V(x_{\text{next}}) - L_p \cdot \delta
\end{equation}
\begin{align*}
    \mathcal{L}_\text{dec-neighbor} = 0
    &\iff \forall x \in \mathcal{X} \setminus \mathcal{X}_G \land V(x) \leq \beta, \max(0, \epsilon - (V(x) - L_p \cdot \delta - V(x_\text{next}))) = 0 \\
    &\iff
    \forall x \in \left\{ x \in \mathcal{X} \setminus \mathcal{X}_G \mid V(x) \leq \beta \right\}, V(x) - L_p \cdot \delta - V(x_\text{next}) \geq \epsilon \\ 
\end{align*}

Applying Eq.~\eqref{eq:proof-lip-neighbor} we get:
\begin{equation*}
    \forall x \in \left\{ x \in \mathcal{X} \setminus \mathcal{X}_G \mid V(x) \leq \beta \right\}, \forall y, y \in \mathcal{B}_{\delta,p}(x_\text{next}), V(x) - V(y) \geq \epsilon,
\end{equation*}
    which is the same as Eq.~\eqref{eq:robust-decrease-cond}.
    
\end{proof}

\subsection{Proof of \cref{thm:appendix-lip-cert}}
\label{appendix:lip-cert-proof}
\begin{theorem}
    \label{thm:appendix-lip-cert}
    Given a Lyapunov Barrier Certificate \( V \) defined following \cref{def:lyapunov-barrier-certificate} with parameters \((\alpha,\beta,\epsilon_r)\), if \(V\) is Lipschitz continuous under the \( l_p \) norm with global Lipschitz constant \( L_p \), and any perturbation in the \(x_{t+1}\) is in \(\mathcal{B}_{\delta,p}(f(x_t,\pi(x_t)))\), then the robust Lyapunov condition Eq.~\eqref{eq:robust-decrease-cond} holds with parameters \( (\alpha, \beta, \epsilon_r - L_p \cdot \delta)\).
\end{theorem}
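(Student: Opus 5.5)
The plan is a direct pointwise argument: fix a state $x$ in the region where the decrease condition applies, bound $V$ over the perturbation ball around the nominal next state using the global Lipschitz constant, and combine this with the nominal decrease condition Eq.~\eqref{eq:decrease-cond} that $V$ satisfies with margin $\epsilon_r$. Conditions Eq.~\eqref{eq:robust-init-cond} and Eq.~\eqref{eq:robust-safety-cond} coincide with Eq.~\eqref{eq:init-cond} and Eq.~\eqref{eq:safety-cond}. They do not depend on $\delta$, so they carry over unchanged with the same $\alpha,\beta$, and only the decrease condition needs work.

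First, fix $x \in \{x \in \mathcal{X}\setminus\mathcal{X}_G \mid V(x)\le\beta\}$ and write $x_{\text{next}} = f(x,\pi(x))$. Take an arbitrary $y \in \mathcal{B}_{\delta,p}(x_{\text{next}})$. Since $V$ is scalar-valued, the output norm is just the absolute value, so the definition of $L_p$ gives
\[
V(y) - V(x_{\text{next}}) \le |V(y)-V(x_{\text{next}})| \le L_p \|y - x_{\text{next}}\|_p \le L_p\,\delta .
\]
Second, the nominal condition Eq.~\eqref{eq:decrease-cond} gives $V(x) - V(x_{\text{next}}) \ge \epsilon_r$. Adding the two inequalities yields
\[
V(x) - V(y) = \bigl(V(x)-V(x_{\text{next}})\bigr) - \bigl(V(y)-V(x_{\text{next}})\bigr) \ge \epsilon_r - L_p\delta .
\]
Because $y$ and $x$ were arbitrary, this is exactly Eq.~\eqref{eq:robust-decrease-cond} with margin $\epsilon = \epsilon_r - L_p\delta$.

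The main subtlety is bookkeeping rather than a real obstacle. Definition~\ref{def:robust-lyapunov-barrier-certificate} requires $\epsilon>0$, so for the conclusion to be a genuine robust certificate we need $\delta < \epsilon_r/L_p$. I would state this explicitly and note that otherwise the inequality still holds but carries a nonpositive margin. A second point is that the perturbed point $y$ may lie outside $\mathcal{X}$, where $V$ is defined. I would handle this by assuming, as the global Lipschitz definition implicitly does, that $V$ is defined on all of $\mathbb{R}^n$ (true for the neural network $V$), or else by restricting to balls contained in $\mathcal{X}$.
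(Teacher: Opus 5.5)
Your proof is correct and follows essentially the same route as the paper: Lipschitz continuity bounds $V(y)-V(x_{\text{next}})$ by $L_p\delta$, and this is combined with the nominal decrease $V(x_{\text{next}})\le V(x)-\epsilon_r$ to get margin $\epsilon_r-L_p\delta$. Your extra remarks are sensible refinements that the paper's proof leaves implicit: the robust certificate needs $\delta<\epsilon_r/L_p$ for a positive margin, and $V$ must be defined at the perturbed points $y$.
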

\begin{proof}
Let \(x_{\text{next}} = f(x, \pi(x))\) denote the unperturbed next state.
Suppose the perturbed next state is \( y \in \mathcal{B}_{\delta, p}(x_{\text{next}}) \), i.e.,
\[
\|y - x_{\text{next}}\|_p \leq \delta.
\]
By Lipschitz continuity of \( V \), we have:
\[
\|V(y) - V(x_{\text{next}})\|_p \leq L_p \cdot \|y - x_{\text{next}}\|_p \leq L_p \cdot \delta.
\]
From Eq.~\eqref{eq:decrease-cond} in \cref{def:lyapunov-barrier-certificate}:
\[
V(x_{\text{next}}) \leq V(x) - \epsilon_r,
\]
we get:
\[
V(y) \leq V(x_{\text{next}}) + L_p \cdot \delta \leq V(x) - \epsilon_r + L_p \cdot \delta.
\]
Rewriting this, we get:
\[
V(y) \leq V(x) - (\epsilon_r - L_p \cdot \delta),
\]
which proves that the robust decrease condition Eq.~\eqref{eq:robust-decrease-cond} holds under perturbations.

\end{proof}

\section{Case Study}
\label{appendix:case-study}
\paragraph{Inverted Pendulum}
The Inverted Pendulum environment is based on OpenAI Gym \cite{brockman2016openai}. The task is to drive the pendulum to the upright position with near-zero velocity and avoid falling down during the process. Specifically, the state of the system is \(\mathbf{x}=[\theta, \dot{\theta}]\), where \( \theta\) is the angle between the pendulum and the upright vertical, and \(\dot{\theta}\) is the angular velocity. The control input \(u \in [-1, 1]\) is the applied torque determined by the policy \(\pi(\mathbf{x})\). The dynamic system is:
\begin{align*}
    \dot\theta' &= (1-b) \dot\theta + (\frac{1.5 \cdot g \cdot \sin\theta}{2\cdot l} + \frac{3}{ml^2} \cdot 2u) \cdot T \\
    \theta' &= \theta + \dot\theta' \cdot T
\end{align*}
with \(g = 10\,\mathrm{m/s^2},~ m = 0.15\,\mathrm{kg},~ l = 0.5\,\mathrm{m},~ b = 0.1,~ T = 0.05\,\mathrm{s}\).
The definitions of state sets follow the previous work \cite{vzikelic2023learning} with \(\mathcal{X}=[-0.7,0.7]^2, \mathcal{X}_I=[-0.3,0.3]^2, \mathcal{X}_G=[-0.2,0.2]^2, \mathcal{X}_U=[-0.7, -0.6] \times [-0.7, 0] \cup [0.6, 0.7] \times [0, 0.7]\).
\paragraph{2D Docking}
The 2D Docking task is taken from \cite{ravaioli2022safe}. Specifically, the task is to drive the deputy spacecraft, controlled with thrusters that provide forces in the \(x\) and \(y\) directions, to safely reach the goal state that is in close proximity to the chief spacecraft, while avoiding entering unsafe states. The state is \(\mathbf{x} = [x, y, v_x, v_y]\), where \((x, y)\) is position and \((v_x, v_y)\) is velocity. The control input \(\mathbf{u} = [F_x, F_y] \in [-1, 1]^2\) are the thrust forces applied along the \(x\) and \(y\) directions respectively, which is given by policy \(\pi(\mathbf{x})\).
The dynamic system given by transition function is:
\begin{align}
x' &= \left( \frac{2 v_y}{n} + 4 x + \frac{F_x}{m n^2} \right) + \left( \frac{2 F_y}{mn} \right) \nonumber \\
&\quad + \left( - \frac{F_x}{m n^2} - \frac{2 v_y}{n} - 3 x \right) \cos(nT) 
+ \left( \frac{-2 F_y}{m n^2} + \frac{v_x}{n} \right) \sin(nT) \label{eq:xdot} \\
y' &= \left( -\frac{2v_x}{n} + y + \frac{4 F_y}{m n^2} \right) 
+ \left( \frac{-2 F_x}{mn} - 3v_y - 6n x \right) T 
- \frac{3F_y}{2m} t^2 \nonumber \\
&\quad + \left( -\frac{4F_y}{mn^2} + \frac{2v_x}{n} \right) \cos(nT)
+ \left( \frac{2F_x}{mn^2} + \frac{4v_y}{n} + 6x \right) \sin(nT) \label{eq:ydot} \\
v_x' &= \left( \frac{2F_x}{mn} \right) 
+ \left( \frac{-2F_y}{mn} + x \right) \cos(nT) 
+ \left( \frac{F_x}{mn} + 2v_y + 3n x \right) \sin(nT) \label{eq:vxprime} \\
v_y' &= \left( \frac{-2F_x}{mn} - 3v_y - 6n x \right)
+ \left( -\frac{3 F_y}{m} \right) T \nonumber \\
&\quad + \left( \frac{2F_x}{mn} + 4v_y + 6n x \right) \cos(nT)
+ \left( \frac{4 F_y}{mn} - 2 v_x \right) \sin(nT) \label{eq:vyprime}
\end{align}

with \( m = 12\,\mathrm{kg}, ~ n = 0.001027\,\mathrm{rad/s}, ~ T = 1\,\mathrm{s}\), and \(F_x, F_y\) between -1 and +1 Newtons. \(\mathbf{x} = [x,y, v_x, v_y]\) is the current state and \(\mathbf{x'} = [x',y', v_x', v_y']\) is the next state given by transition function \(f\). Following \cite{mandal2024formally}, the state sets are defined as \( 
\mathcal{X}=\mathbb{R}^4, ~\mathcal{X}_{safe}=[-2,2]^2 \times [-0.5,0.5]^2, ~\mathcal{X}_U=\mathcal{X} \setminus \mathcal{X}_{safe}, ~\mathcal{X}_G = [-0.35, 0.35]^2 \times \mathbb{R}^2, ~\mathcal{X}_I = [-1,1]^2 \times [0,0]^2\).

\section{Training Algorithm}

Concretely, we sample data from the state space and use loss functions to penalize the violation of the condition in Eq.~\eqref{eq:robust-init-cond}--\eqref{eq:robust-safety-cond}. Note that \(\mathcal{X}_U \cap \mathcal{X}_I = \emptyset  \) and \(\forall x \in \mathcal{X}_U,  V(x) > \beta\), so \( x \in \mathcal{X}_U\) only influences Eq.~\eqref{eq:robust-safety-cond}. For simplicity of training, we do not add this constraint during training. Instead, we output a filtered certificate \cite{mandal2024formally} on the certificate we get from the training, in which all the states \(x \in \mathcal{X}_U\) are assigned to \(e\) where \(e > \alpha\). The filtered certificate \(V\) then satisfies Eq.~\eqref{eq:robust-safety-cond} for all $x \in \mathcal{X}_U$, so we do not need sample from \( \mathcal{X}_U \). There is no constraint for \( x \in \mathcal{X}_G\), so we also do not need sample data from \( \mathcal{X}_G \). We only need to consider Eq.~\eqref{eq:robust-init-cond} and Eq.~\eqref{eq:robust-decrease-cond}.

We first train an initial controller policy \( \pi\) via a reinforcement learning algorithm. Next, we perform several training epochs in which only the parameters of the certificate function \(V\) are updated to initialize it, before jointly training both the controller and the certificate. Then in the CEGIS loop, every iteration we first train the controller policy \(\pi\) and certificate \(V\) together to minimize the total loss \( \mathcal{L}_\text{total}\), which enforces the controller and certificate to satisfy Eq.~\eqref{eq:robust-init-cond} and Eq.~\eqref{eq:robust-decrease-cond}. We describe the design of loss functions in Section \ref{sec:training-loss-design}. Once the loss reaches 0, it means that all the training data satisfy these conditions, then we use a neural network verifier to search for counterexamples that violate these conditions. If such counterexamples are not found, we get the verified controller and certificate. Otherwise, we augment the training dataset with samples from their neighborhoods and retrain the controller and certificate. This local resampling is motivated by the Lipschitz continuity of both the controller and the certificate: If a violation occurs at one state, nearby states are also likely to violate the conditions.

\label{appendix:training-alg}
\begin{algorithm}[H]
\caption{CEGIS Loop for Robust Lyapunov Barrier Certificate}
\label{alg:cegis-loop}
\begin{algorithmic}[1]
\State \textbf{Input:} 
    \State \quad Environment dynamics $f$
    \State \quad State sets $\mathcal{X}_I \leftarrow \text{initial states}$, $\mathcal{X}_G \leftarrow \text{goal states}$, $\mathcal{X}_U \leftarrow \text{unsafe states}$
    \State \quad $\mathcal{D}_\text{orig} \leftarrow \text{training data sampled from } \mathcal{X} \backslash \{\mathcal{X}_G \cup \mathcal{X}_U\} $ 
    \State \quad $\beta \leftarrow \text{initial certificate threshold}$
    \State \quad $\epsilon \leftarrow \text{descent margin}$
    \State \quad $ w \leftarrow \text{loss weight for counterexample data}$
    \State \quad $ \lambda_* = \{\lambda_\text{init},\lambda_\text{dec},\lambda_\text{dec-adv}, \lambda_\text{dec-neighbor},\lambda_\text{lip-global}\} \leftarrow \text{training loss weights}$
    \State \quad $ \tau \leftarrow \text{upper bound of Lipschitz constant, for Lipschitz Regularization method}$
    \State \quad $\text{Verifier} \leftarrow \text{neural network verifier}$
    
\State \textbf{Initialize:}
    \State \quad $\pi \leftarrow \text{trained by RL algorithm}$
    \State \quad $V \leftarrow \text{trained by minimizing } \mathcal{L}_{\text{total}}(\beta,\epsilon,\tau,\lambda_*)$ on dataset \(\mathcal{D}_\text{orig}\)
    \State \quad $\mathcal{D}_\text{ce} \leftarrow \emptyset$
\Repeat
    \State \textbf{Training:}
        \State \quad $\mathcal{L}_\text{orig} \leftarrow \mathcal{L}_\text{total}(\beta,\epsilon,\tau,\lambda_*) \text{ calculated on dataset } \mathcal{D}_\text{orig} $ 
        \State \quad $\mathcal{L}_\text{ce} \leftarrow \mathcal{L}_\text{total}(\beta,\epsilon,\tau,\lambda_*) \text{ calculated on dataset } \mathcal{D}_\text{ce} $ if $\mathcal{D}_\text{ce} \not= \emptyset $ else 0
        \State \quad $\mathcal{L}_\text{final} \leftarrow \mathcal{L}_\text{orig} + w\cdot\mathcal{L}_\text{ce} $ 
        \State \quad $(V, \pi) \leftarrow \text{trained by minimizing } \mathcal{L}_{\text{final}} $

    \State \textbf{Verification:}
        \State \quad $\text{counterexamples} \leftarrow \text{Verifier}(V_\theta, \pi_\phi)$
        
    \If{$\text{counterexamples} \neq \emptyset$}
        \State \textbf{Refinement:}
            \State \quad $\mathcal{D}_{\text{add}} \leftarrow \text{uniformly sample \(m\) data points in \(l_p\) ball around each counterexample}$
            \State \quad $\mathcal{D}_\text{ce} \leftarrow \mathcal{D}_\text{ce} \cup \mathcal{D}_{\text{add}}$
    \Else
        \State \textbf{break}
    \EndIf
\Until{$\text{max iterations reached} \lor \text{counterexamples} = \emptyset$}

\State \textbf{Output:} $(V, \pi) \leftarrow \text{certified certificate-controller pair}$
\end{algorithmic}
\end{algorithm}

\section{Supplementary Experiments}
\subsection{Experiment Results}
\label{appendix:exp-results}
We re-ran our experiments with 5 trials. We report the average performance and standard deviation, visualized with error bars in the accompanying figures. 
Table~\ref{tab:certified-bounds-multiruns} summarize the certified robustness across different methods in both environments. Figure~\ref{fig:2d-docking-multiruns-pgd} and Figure~\ref{fig:inverted-pendulum-multiruns-pgd} illustrate the empirical robustness of each method under PGD attacks at various perturbations in both environments. 
We observe that the overall trends and conclusions remain consistent with Section~\ref{subsec:certified} and Section~\ref{subsec:empirical}.

\begin{table}[t]
\centering

\begin{subtable}{0.48\textwidth}
\centering
\begin{adjustbox}{max width=\textwidth}
\begin{tabular}{@{}lr@{}}
    \toprule
    \textbf{Method} & \textbf{Certified Bound (Mean $\pm$ Std)} \\
    \midrule
    Vanilla        & 0.0042 $\pm$ 0.0012 \\
    Lip-3          & 0.0072 $\pm$ 0.0017 \\
    Lip-Neighbor   & 0.0062 $\pm$ 0.0022 \\
    PGD            & 0.0081 $\pm$ 0.0004 \\
    \bottomrule
\end{tabular}
\end{adjustbox}
\caption{Inverted Pendulum}
\end{subtable}
\hfill
\begin{subtable}{0.48\textwidth}
\centering
\begin{adjustbox}{max width=\textwidth}
\begin{tabular}{@{}lr@{}}
    \toprule
    \textbf{Method} & \textbf{Certified Bound (Mean $\pm$ Std)} \\
    \midrule
    Vanilla        & 0.0086 $\pm$ 0.0009 \\
    Lip-1          & 0.0163 $\pm$ 0.0012 \\
    Lip-Neighbor   & 0.0154 $\pm$ 0.0016 \\
    PGD            & 0.0124 $\pm$ 0.0010 \\
    \bottomrule
\end{tabular}
\end{adjustbox}
\caption{2D Docking} 
\end{subtable}

\caption{
Certified perturbation bounds (mean $\pm$ std) for different training methods.}
\label{tab:certified-bounds-multiruns}
\end{table}

\begin{figure}
    \centering
    \includegraphics[width=0.75\linewidth]{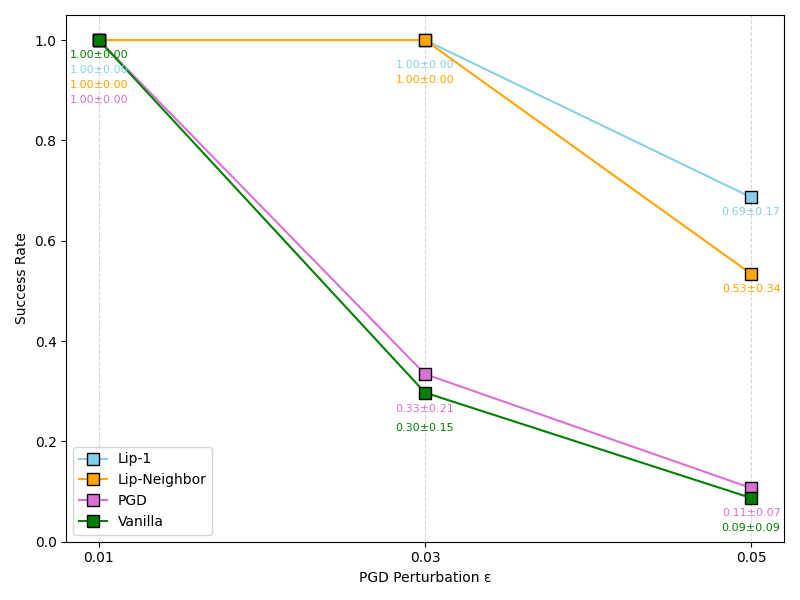}
    \caption{Success rate under various PGD perturbations in 2D docking}
    \label{fig:2d-docking-multiruns-pgd}
\end{figure}

\begin{figure}
    \centering
    \includegraphics[width=0.75\linewidth]{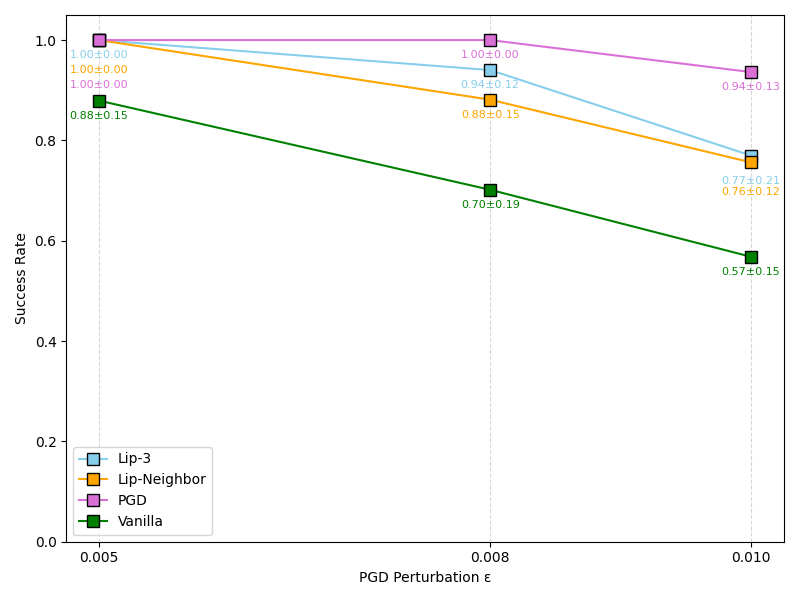}
    \caption{Success rate under various PGD perturbations in Inverted Pendulum}
    \label{fig:inverted-pendulum-multiruns-pgd}
\end{figure}

\subsection{Ablation Study}
\label{appendix:ablation}
In Table~\ref{tab:combined-ablation} we summarize the effects of key training hyperparameters on robustness and convergence. For each setting, we report:
\begin{itemize}
    \item the number of CEGIS loop iterations, which reflects the difficulty of achieving convergence;
    \item the certified perturbation bound, representing the model’s certifiable robustness;
    \item and empirical robustness, shown as the percentage of trajectories successfully reaching the goal under PGD attacks at different perturbation levels (0.01, 0.03, and 0.05).
\end{itemize}
\begin{table}[h]
\centering
\begin{tabular}{@{}llrrrrr@{}}
\toprule
Hyperparameter & Value & CEGIS Iterations & Certified Bound & 0.01 adv & 0.03 adv & 0.05 adv \\
\midrule

\multirow{3}{*}{$w$} 
& 1000 & 4 & 0.014 & 100\% & 100\% & 23\% \\
& 500  & 5 & 0.013 & 100\% & 100\% & 35\% \\
& 100  & 4 & 0.013 & 100\% & 100\% & 45\% \\

\midrule

\multirow{4}{*}{$\tau$} 
& 0.8\textsuperscript{*} & -- & -- & -- & -- & -- \\
& 1   & 3 & 0.015 & 100\% & 100\% & 85\% \\
& 2   & 6 & 0.013 & 100\% & 100\% & 45\% \\
& 3   & 6 & 0.013 & 100\% & 100\% & 54\% \\

\midrule

\multirow{5}{*}{$\lambda_{\text{global-lip}}$} 
& 0.5  & 8 & 0.013 & 100\% & 83\% & 16\% \\
& 1    & 4 & 0.013 & 100\% & 100\% & 45\% \\
& 5    & 6 & 0.010 & 100\% & 100\% & 31\% \\
& 10   & 8 & 0.015 & 100\% & 100\% & 47\% \\
& 20   & 8 & 0.015 & 100\% & 100\% & 22\% \\

\midrule

\multirow{3}{*}{$\epsilon$} 
& 0.01  & 4 & 0.012 & 100\% & 100\% & 45\% \\
& 0.005 & 9 & 0.010 & 100\% & 83\%  & 9.7\% \\
& 0.001 & 7 & 0.003 & 100\% & 10.9\% & 12.1\% \\

\bottomrule
\end{tabular}
\caption{Effects of varying key hyperparameters on convergence and robustness. \textsuperscript{*}Training with \(\tau = 0.8\) did not converge within 12 hours.}
\label{tab:combined-ablation}
\end{table}



\end{document}